\def\thm@space@setup{%
  \thm@preskip=\parskip \thm@postskip=0pt
}
\newcommand{\R}{{\ensuremath{\mathbb{R}}}}
\newcommand{\e}{\mathrm{e}}
\newtheorem{theorem}{Theorem}[section]
\newtheorem{proposition}[theorem]{Proposition}
\newtheorem{remark}[theorem]{Remark}
\theoremstyle{definition}
\newcommand{\syst}[1]{\left \{ \begin{array}{l} #1 \end{array} \right. \kern-\nulldelimiterspace}	
\begin{document}
\title{HOT-POT: Optimal Transport for Sparse Stereo Matching}

\author*[1,2]{\fnm{Antonin} \sur{Clerc}}\email{antonin.clerc@math.u-bordeaux.fr}
\equalcont{These authors contributed equally to this work.}
\author*[1]{\fnm{Michael} \sur{Quellmalz}}\email{quellmalz@math.tu-berlin.de}
\equalcont{These authors contributed equally to this work.}
\author*[1]{\fnm{Moritz} \sur{Piening}}\email{piening@math.tu-berlin.de}
\equalcont{These authors contributed equally to this work.}
\author[3,4]{\fnm{Philipp} \sur{Flotho}}\email{Philipp.Flotho@uni-saarland.de}
\author[1]{\fnm{Gregor} \sur{Kornhardt}}\email{kornhardt@math.tu-berlin.de}
\author[1]{\fnm{Gabriele} \sur{Steidl}}\email{steidl@math.tu-berlin.de}

\affil[1]{%
  \orgdiv{Institute of Mathematics}, 
  \orgname{Technische Universität Berlin}, %
  \orgaddress{\country{Germany}}
}

\affil[2]{%
  \orgname{Univ. Bordeaux, CNRS, Bordeaux INP, IMB, UMR 5251}, %
  \orgaddress{F-33400 Talence, France}
}

\affil[3]{%
  \orgdiv{Chair for Clinical Bioinformatics, Saarland Informatics Campus}, 
  \orgname{Saarland University}, %
  \orgaddress{\country{Germany}}
}

\affil[4]{%
  \orgdiv{Optical Neuroimaging Unit}, 
  \orgname{Okinawa Institute of Science and Technology}, %
  \orgaddress{\country{Japan}}
}

\keywords{Optimal Transport, Hierarchical Optimal Transport, Stereo Matching, Stereo Vision}


\abstract{ 
  Stereo vision between images faces a range of challenges, including occlusions, motion, and camera distortions, across applications in autonomous driving, robotics, and face analysis. Due to parameter sensitivity, further complications arise for stereo matching with sparse features, such as facial landmarks.
  To overcome this ill-posedness and enable unsupervised sparse matching,
  we consider line constraints of the camera geometry from an optimal transport (OT) viewpoint. 
  Formulating camera-projected points as (half)lines, 
  we propose the use of the classical epipolar distance as well as a 3D ray distance  to quantify matching quality.
  Employing these distances as a cost function 
  of a (partial) OT problem, we arrive at efficiently solvable assignment problems. 
  Moreover, we 
  extend our approach to unsupervised object matching
  by formulating it as a hierarchical OT problem.
  The resulting algorithms allow for efficient feature and object matching, as demonstrated in our numerical experiments.
  Here, we focus on applications in facial analysis, where we aim to match distinct landmarking conventions.
}
\maketitle

\section{Introduction}
Identification of features across views and inference of depth information via \textbf{stereo matching} is a core technique in computer vision, allowing for 3D reconstructions from multiple 2D views \cite{Liu2023}. 
It enables obstacle detection in robotics and autonomous driving, 
surface defect detection in industrial applications, and
head reconstruction in facial analysis \cite{Kok_Rajendran_2019, Liu2023, prince2012computer, scharstein1999view}. 
However, practical algorithms need to overcome a variety of real-world challenges. 

Even for single-modality camera systems, the identification of view-invariant features is generally hindered by radiometrically distorted pixel brightness, 
depth changes near object boundaries, and partial occlusions. 
These issues become more pronounced for cross-modal systems like RGB-thermal.
In particular, traditional stereo matching methods relying on the comparison of pixel values
\cite{CFNSS2015, dalal2005histograms,fsian2022comparisonstereomatchingalgorithms} become inapplicable under these conditions. Therefore, suitable cross-spectral approaches are mostly based on pretrained neural networks \cite{liu2022multi_cross_descr, kim2016dasc_cross_descr, pingera2012CSstereo_cross_desc,NNstereo2025} and local feature descriptors  \cite{liang2022deep_cross_learn,zhi2018deep_cross_learn}. 
Notably, most modern feature extractors are {dense}  \cite{guo2023unsupervised,li2024local}, producing one feature per pixel. In contrast, we focus on \textbf{sparse features} located at image keypoints \cite{schauwecker2012new}.

Beyond the detection of feature descriptors, their cross-view matching is inherently ill-posed since it is highly sensitive to detector noise and occlusions \cite{mikolajczyk2005performance,pingera2012CSstereo_cross_desc}, especially if only the location and no additional information, such as color, is available.
Moreover, this instability increases dramatically for {sparse features} as considered in this work.
In this setting, key challenges arise from significant differences between descriptors across views in terms of their number and their locations.
Such limitations necessitate the modification of nearest-neighbor matching to ensure robustness~\cite{schauwecker2012new}.

\textbf{Optimal transport (OT)} provides a robust relaxation of nearest-neighbor matching \cite{COTFNT,Villani2003}, 
relaxing the optimal assignment problem to a linear program over probabilistic assignments.
Due to the availability of efficient solvers {via Sinkhorn's method \cite{Cuturi2013,flamary2021pot,Knight2008} or slicing \cite{BonRabPeyPfi15,KolNadSimBadRoh19,nguyen2025introduction,QueBeiSte23,QueBueSte2024}} as well as extensions to partial assignments \cite{Chapel2020,chizat2018unbalanced,sejourne2023unbalanced}, 
the OT framework has become a popular tool for comparing and matching point clouds \cite{leroy2021pix2point,li2021selfpointflow,puy2020flot,shen2021accurate}.
This has further led to applications in stereo vision with dense features by matching along image rows using OT \cite{frameworkstereoOT} and by using an OT-based module in the stereo matching model H-Net \cite{huang2022h},
which allows the network to focus on feature mismatches regarding the so-called epipolar constraints of the camera geometry. 
Extending this approach, we derive distances between feature keypoints from these epipolar constraints to perform sparse stereo matching using OT assignments.
Beyond feature matching via \textbf{classical OT}, 
we integrate the resulting cost functions into \textbf{hierarchical OT} \cite{alvarezmelis2020geometricdatasetdistancesoptimal, delon2020wassersteintypedistancespacegaussian, 
nguyen2025lightspeed,Schmitzer2013HOT}
to enable object matching based on sparse keypoints.

As a particular application, the main motivation for this study is a \textbf{cross-modal stereo vision} setting similar to \cite{flotho2021multimodal}. We are given facial image pairs captured simultaneously by a conventional RGB camera and a thermal (long-wave infrared) camera, whose intrinsic parameters and locations are known. 
The main goal is the matching of facial features obtained by pretrained modality-dependent feature trackers or \textbf{landmarkers} \cite{bulat2017far,kemelmacher2016megaface,Tang2017,wood2021fake,wu2019facial}.
These landmarkers 
discretize the underlying 3D facial geometry and may yield distinct sets of points for different modalities \cite{deng2019menpo,flotho2025t}, see Figure~\ref{fig:thermal}.
Therefore, the two sets of 2D points are subject to modality-specific noise and may differ in cardinality. 
Furthermore, for two stereo images containing faces of many persons, we want to identify reliably which face in the RGB image belongs to which face in the thermal image.

\begin{figure}[!ht]
    \centering

    \begin{subfigure}[b]{0.32\linewidth}
        \centering
        \includegraphics[width=\linewidth,
          trim=10mm 50mm 10mm 10mm,
          clip
        ]{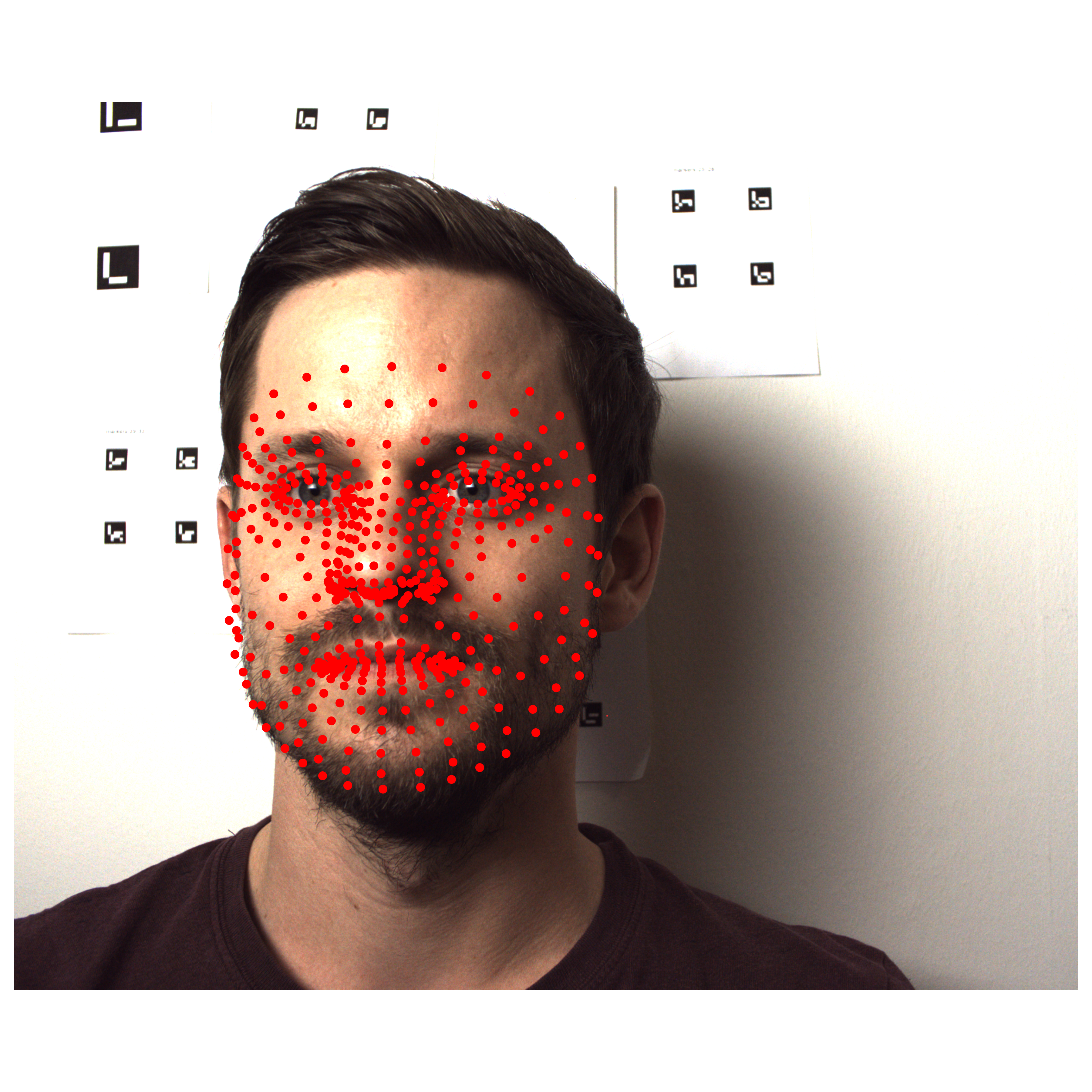}
        \caption*{1st RGB}
        \label{fig:real_left_rgb}
    \end{subfigure}
    \hfill
    \begin{subfigure}[b]{0.32\linewidth}
        \centering
        \includegraphics[width=\linewidth,
          trim=10mm 50mm 10mm 10mm,
          clip
        ]{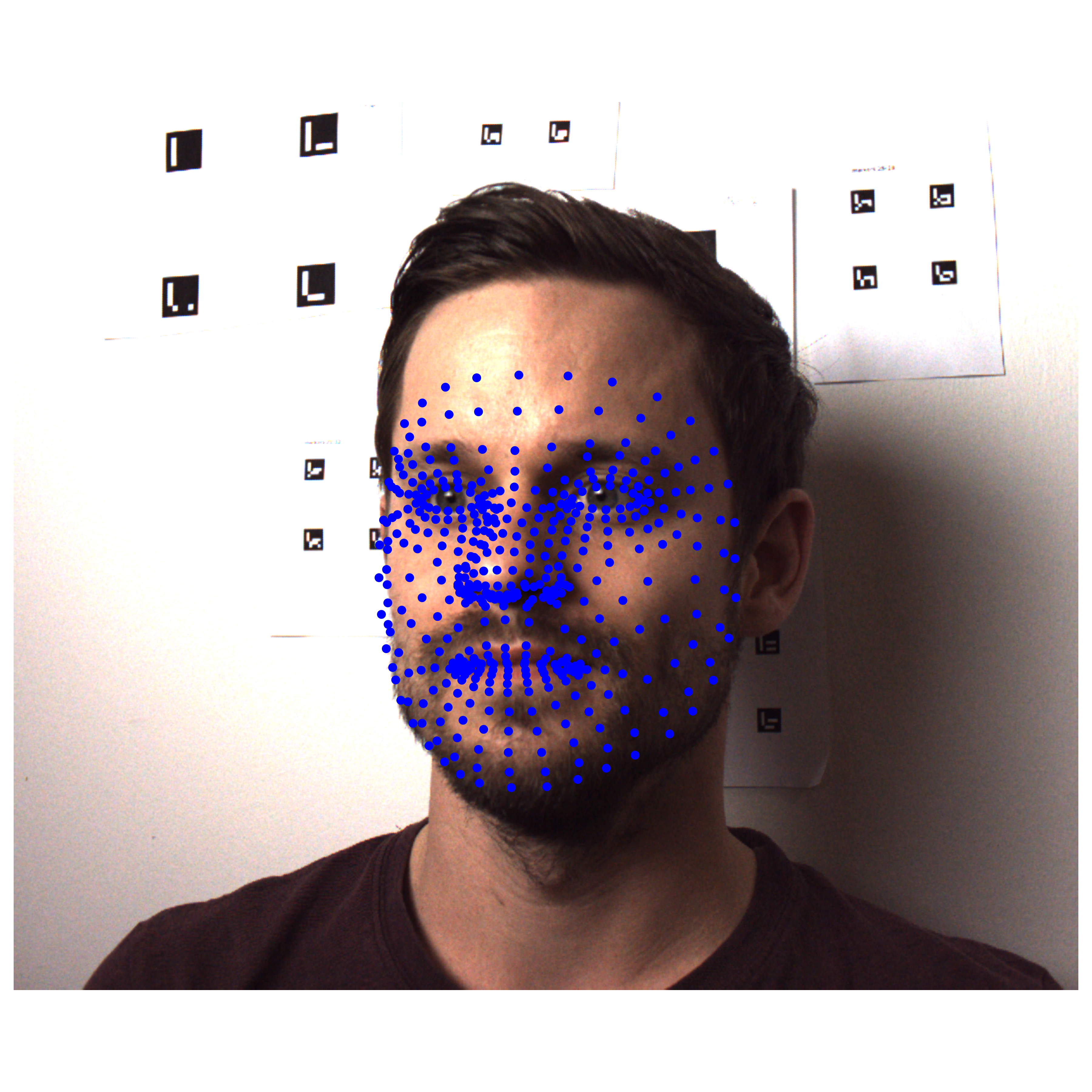}
        \caption*{2nd RGB}
        \label{fig:real_right_rgb}
    \end{subfigure}
    \hfill
    \begin{subfigure}[b]{0.32\linewidth}
        \centering
        \includegraphics[width=\linewidth,
          trim=30mm 50mm 10mm 10mm,
          clip
        ]{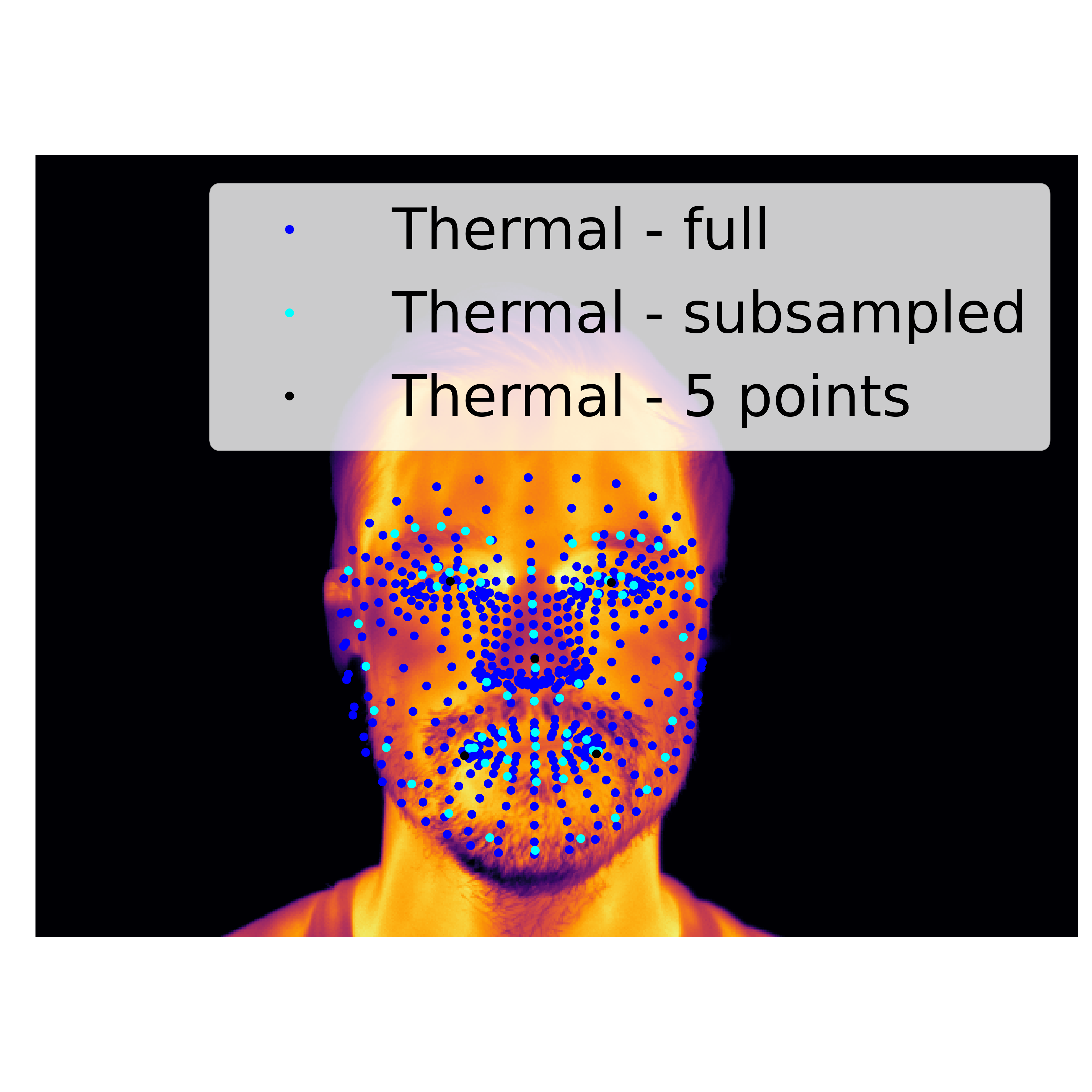}
        \caption*{\scriptsize  Thermal}
        \label{fig:real_right_thermal}
    \end{subfigure}
    \caption{RGB (left, middle) and thermal (right) images with facial landmarks. 
    Landmarks conventions vary in terms of size and locations, as illustrated by the thermal image. Images were originally reported in \cite{flotho2023lagrangian}.}
        \label{fig:thermal}
\end{figure}

We model our setup via an unknown 3D point cloud that is projected onto both camera planes, yielding two distinct 2D point clouds, which are considered as our measurements. 
Our practical goal is then i)~to perform an accurate point-to-point matching across camera planes and ii)~to establish correspondences between entire objects, each containing multiple points.
In practice, these 2D point clouds are obtained via given feature tracking algorithms, such as a landmarker.
Within the setup of facial landmarking, we pursue the goal of cross-modal
landmark-to-landmark and face-to-face matchings.

Our main contributions are as follows:

\begin{itemize}
    \item We formulate the stereo matching problem as an instance of an OT problem \cite{COTFNT,Villani2003} with cost functions tailored to the geometric setting.
    
    \item We propose a novel cost function, called 3D ray distance, based on the distance between 3D half-lines (rays) originating from the camera centers. Numerical experiments show its advantage over the classical epipolar distance \cite{Hartley2004, prince2012computer}.
    
    \item To handle matching between entire objects (e.g., faces) represented as unordered sets of points, we introduce a \textbf{hierarchical optimal transport} (HOT) formulation. Numerical studies demonstrate that HOT yields stable correspondences.
\end{itemize}

Our method addresses several of the classical challenges in stereo vision. The proposed cost function enhances the robustness of point-level matching by reducing sensitivity to noise and ambiguity, and by limiting the set of admissible matches. This proves to be particularly effective for occlusions and cross-modality scenarios. The use of OT offers an efficient formulation for the matching problem. Finally, the HOT framework extends our algorithm from point-to-point to object-to-object matching.

The paper is organized as follows.
Section~\ref{sec:model} covers the epipolar geometry and introduces our new matching cost. 
Section~\ref{sec:OT} introduces basic concepts of OT required to understand the framework, and outlines the specific OT problems we consider. 
Section~\ref{sec:algo} focuses on algorithmic considerations, including numerical implementation, error computation, and evaluation metrics to assess the quality of the matchings. Section~\ref{sec:Numerical_Experiments} presents numerical results on various simulated and real-world datasets  both for OT and HOT formulations.
Conclusions are drawn in Section~\ref{sec:conclusions}.

\section{Distances of Projected Points} \label{sec:model}
In this section, we propose two different  \enquote{distances} between projected points in the camera planes
which aim to preserve their true, unknown  distances in 3D, namely
\begin{itemize}
\item the 3D ray distance and
\item the epipolar distance.
\end{itemize}
For a detailed treatment of the underlying epipolar geometry and camera models, we refer to \cite{hartley2004multiple,prince2012computer,steger2022multi}.
Considering two cameras observing the same 3D point $w\in \R^3$, we
can express the points measured by the cameras in homogeneous coordinates with the third component fixed to one, i.e.,
as elements of the projective space
$$
\mathbb P^2\coloneqq \{(x_1,x_2,1) \mid (x_1,x_2)\in\R^2\}.
$$ 
We choose the coordinate system such that the left camera is centered at $(0,0,0)^\top$ and is imaging along the positive third coordinate.
By $K_l,\, K_r \in \R^{3\times3}$ we denote the \textbf{intrinsic matrix} of the left and right camera, respectively. These are upper triangular matrices with positive diagonal entries, which encode internal characteristics such as focal length and principal point of the camera. The \textbf{extrinsic parameters} describe the relative orientation and position between the two cameras, modeled by a rotation matrix $R\in \text{SO}(3)$ and a translation vector $t \in \R^3\setminus\{0\}$.
Then the projections  $x,y \in \mathbb P^2$ of a point $w \in \R^3$ 
onto the left and right camera plane are given by
\begin{equation} \label{eq:model}
\lambda_l x = K_l w 
\quad \text{and} \quad
\lambda_r y = K_r (Rw + t),
\end{equation}
respectively,
where $x,y \in \mathbb P^2$ and $\lambda_l,\lambda_r >0$ denote the third component of $K_lw$ and $K_r w$, respectively.
We assume that $w$ is located in front of both cameras, meaning that 
\begin{equation} \label{eq:front}
\langle w, e_3\rangle>0 \quad \text{and} \quad \langle Rw+t,e_3\rangle>0.
\end{equation}
The configuration is shown in Figure \ref{fig:epipolar_geometry}.
Let 
$$
\mathcal W \coloneqq \{w \in \R^3: \langle w, e_3\rangle>0,\;\langle Rw+t,e_3\rangle>0\}.
$$

\begin{figure}
    \centering
    \includegraphics[width=\linewidth]{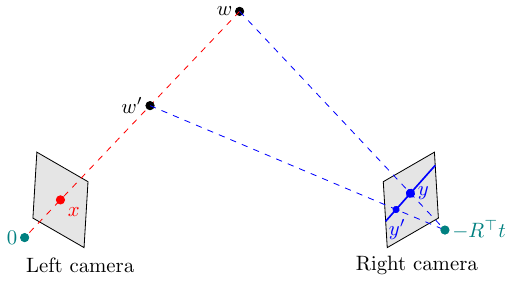}
    \caption{3D point $w$ observed by two cameras, specifically at $x$ in the left and $y$ in the right camera.
    The epipolar line (solid blue) in the right camera corresponding to~$x$ consists of all points $y'$ that originate from 3D points $w'$ that are projected to $x$ in the left camera.
    In teal are the focal points $0$ and $-R^\top t$.
    }
    \label{fig:epipolar_geometry}
\end{figure}

\begin{remark} \label{rem:model_general}
    In a more general model, where each camera has a rotation $R_l,R_r\in \rm{SO}(3)$ and translation $t_r,t_l \in \R^3\setminus\{0\}$, the projections are given by
    \begin{align} \label{eq:model_general}
    \lambda_l  x
    &= 
    K_l (R_l \tilde w +t_l)
    ,\\
    \lambda_r  y
    &= 
    K_r(R_r \tilde w + t_r).
    \end{align}
   Then the substitution $\tilde w=R_l^\top (w - t_l)$, and setting $R=R_r R_l^\top$ and $t=t_r-R_r t_l$ yields \eqref{eq:model}.
\end{remark}

\subsection{3D Ray Distance}\label{sec:ray}
Setting 
$$
\tilde x \coloneqq K_l^{-1} x 
\quad \text{and} \quad
\tilde y \coloneqq K_l^{-1} y,
$$
we obtain by \eqref{eq:model} that
\begin{equation} \label{eq:lines}
        w = \lambda_l \tilde x 
        \quad \text{and} \quad
        w = \lambda_r R^\top \tilde y - R^\top t.
\end{equation} 
The right-hand side of each of the two equations represents a line in 
$\R^3$ 
of the form 
\begin{equation} \label{eq:line_3d}
\begin{aligned}
&\mathcal L_x
=
\{\lambda_x r_x + s_x: \lambda_x\in\R\},
\\ 
&\mathcal L_y
=
\{\lambda_y r_y + s_y: \lambda_y\in\R\},
\end{aligned}
\end{equation}
where $r_x=\tilde x$, $s_x=0$, $r_y=R^\top\tilde y$, and
$s_y=-R^\top t$.
The minimal distance between these lines, $\text{d}(\mathcal L_x,\mathcal L_y)$, is given by
\begin{equation} 
 \left\{
\begin{array}{lcll}
\frac{|\langle r_x\times r_y, s_y-s_x\rangle|}{\|r_x\times r_y\|}
&=&
\frac{|\langle \tilde x\times R^\top\tilde y, R^\top t\rangle|}{\|\tilde x\times R^\top\tilde y\|}
&\text{if }r_x\times r_y\neq0,\\[1ex]
\frac{\|r_x\times (s_y-s_x)\|}{\|r_x\|}
&=&
\frac{\|\tilde x\times R^\top t\|}{\|\tilde x\|}
&\text{otherwise } 
\end{array}
\right.
\end{equation}
and could serve as a possible distance between $x$ and $y$. Note that $r_x \times r_y = 0$ if and only if 
$\mathcal L_x \parallel \mathcal L_y$.
However, this distance is computed on the entire space $\mathbb{R}^3$, but
we are only interested in intersections or minimal distances for points that are visible to both cameras.
Indeed, the lines \eqref{eq:line_3d} may intersect or attain their minimal distance behind the cameras. In such cases, although the distance between the lines is small, the corresponding point is not observable by the cameras. Therefore, such matches should be avoided. 

To this end, consider the case  $r_x\times r_y\neq0$. 
Then there exists a unique shortest line segment connecting $\mathcal L_x$ and $\mathcal L_y$. It intersects the respective lines in 
\begin{align}
\label{eq:closest_points}
&b_x = s_x + \frac{\langle s_x - s_y, {n}_x\rangle}{\langle {r}_x, {n}_x\rangle} {r}_x,\\
&b_y = s_y -\frac{\langle s_x - s_y, {n}_y\rangle}{\langle {r}_y, {n}_y\rangle} {r}_y, 
\end{align} 
where $n_x \coloneqq r_x \times(r_x\times r_y)$ and
$n_y \coloneqq r_y \times(r_x\times r_y)$.
Both points $b_x$, $b_y$ should fulfill \eqref{eq:front}.
If $b_x$ or $b_y$ lies behind one of the cameras, the ray distance is usually smallest between the focal points of the two cameras.
Hence, we use the distance between the focal points, namely
$\|R^\top t-0\| = \| t \|$.
Thus, we define \textbf{ray distance}
$d^\mathrm{ray}: \mathbb P^2 \to \R_{\ge 0}$ 
by
\begin{equation} \label{eq:line_corrected_distance}
\small
d^\mathrm{ray}(x,y)
\coloneqq
\left\{
\begin{array}{ll}
\frac{|\langle \tilde x \times R^\top \tilde y, R^\top t \rangle|}{\|\tilde x \times R^\top \tilde y\|}
& \text{if } r_x\times r_y \neq 0,\, 
\\
&
\text{and } b_x,b_y \in \mathcal W,
\\
\frac{\|\tilde x \times R^\top t\|}{\|\tilde x\|} & \text{if } r_x\times r_y = 0, \\[10pt]
\| t \| & \text{otherwise}.
\end{array}
\right.
\end{equation}

The so-defined ray distance is not a metric, 
because, in general (depending on the camera parameters), 
$d^\mathrm{ray}$ is not symmetric and $d^\mathrm{ray}(x,x)$ may be non-zero.
However, we have the following properties.

\begin{proposition}
\label{prop:ray_distance_pseudo_definite_property}
    If $w\in\mathcal W$ and $x,y\in\mathbb P^2$ are given by \eqref{eq:model},
    then $d^\mathrm{ray}(x,y)=0$.
    Conversely, if $x,y\in\mathbb P^2$ and $d^\mathrm{ray}(x,y)=0$, 
    there exists $w\in\R^3$ such that \eqref{eq:model} holds for some $\lambda_l,\lambda_r\in\R$. 
    If additionally $r_x \times r_y \neq 0$, then $w$ is uniquely determined.
\end{proposition}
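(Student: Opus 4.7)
The plan is to argue the two directions separately, in both cases using the interpretation from \eqref{eq:lines} that $x,y$ are the projections of a common 3D point $w$ precisely when the reconstruction lines $\mathcal L_x,\mathcal L_y$ share the point $w$.

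For the forward implication, I start from $w\in\mathcal W$ projecting to $x,y$ via \eqref{eq:model}, so by \eqref{eq:lines} both lines pass through $w$ and therefore their minimal Euclidean distance is zero. I then check that the branch of \eqref{eq:line_corrected_distance} selected in each geometric configuration also returns zero. If $r_x\times r_y\neq 0$, the two lines meet in a single point, so $b_x=b_y=w\in\mathcal W$ and we land in the first branch; the triple product $\langle \tilde x\times R^\top\tilde y,R^\top t\rangle$ vanishes because \eqref{eq:lines} expresses $R^\top t$ as a linear combination of $\tilde x$ and $R^\top\tilde y$. If $r_x\times r_y=0$, the lines are parallel and share the point $w$, hence coincide, so $-R^\top t\in\mathcal L_x$ forces $R^\top t\parallel\tilde x$, and the second branch evaluates to zero as well.

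For the converse, I treat the three branches of \eqref{eq:line_corrected_distance} in turn. In the first branch, $\langle \tilde x\times R^\top\tilde y,R^\top t\rangle=0$ says that $R^\top t$ is coplanar with $\tilde x$ and $R^\top\tilde y$. Since these two vectors are linearly independent by $r_x\times r_y\neq 0$, the equation $\lambda_r R^\top\tilde y-\lambda_l\tilde x=R^\top t$ has a unique solution $(\lambda_l,\lambda_r)\in\R^2$, and the choice $w\coloneqq\lambda_l\tilde x$ then satisfies both equations of \eqref{eq:model}; uniqueness of $w$ follows from that of $\lambda_l$. In the second branch, the simultaneous vanishing of $r_x\times r_y$ and $\tilde x\times R^\top t$ forces $\tilde x$, $R^\top\tilde y$ and $R^\top t$ to be mutually parallel, which makes $\mathcal L_x=\mathcal L_y$; any point on this common line furnishes a valid $w$, which accounts for the failure of uniqueness in this degenerate case. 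The third branch cannot occur since its value is $\|t\|\neq 0$.

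The main obstacle I anticipate is not the algebra itself but the bookkeeping of the case analysis: one has to match each geometric situation to the correct branch of the piecewise definition of $d^\mathrm{ray}$ in both directions. In particular, in the forward direction one must confirm that the guard $b_x,b_y\in\mathcal W$ of the first branch is automatically inherited from $w\in\mathcal W$, by observing that $b_x=b_y=w$ whenever the lines meet transversally, and in the converse direction one must be careful to record that uniqueness of $w$ is exactly what breaks in the parallel case, matching the hypothesis of the last sentence of the proposition.
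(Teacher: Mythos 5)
Your proposal is correct and follows essentially the same route as the paper: both directions are handled by matching each geometric configuration to the corresponding branch of the piecewise definition of $d^\mathrm{ray}$, with the forward direction using that $w$ lies on both lines (so $b_x=b_y=w\in\mathcal W$ in the transversal case and $\mathcal L_x=\mathcal L_y$ in the parallel case), and the converse reconstructing $w$ as the intersection point in the first branch and as any point of the common line in the second, while the third branch is excluded by $t\neq 0$. Your added details (the coplanarity argument for the vanishing triple product and the explicit linear system for $(\lambda_l,\lambda_r)$) merely spell out steps the paper states more tersely.
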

\begin{proof}
Let $w$ satisfy \eqref{eq:front} and $x,y$ be the projections of $w$ via \eqref{eq:model}. 
By construction, 
$w$ is on both lines $\mathcal L_x$ and $\mathcal L_y$.
If the lines are not identical, i.e., if $r_x\times r_y\neq0$, we have $w=b_x=b_y$.
By \eqref{eq:front}, we are in the first case of the definition of $d^\mathrm{ray}$ and hence we have $d^\mathrm{ray}(x,y)=0$.
Otherwise, if the lines are identical, i.e., if $r_x\times r_y=0$, their distance is zero and so $d^\mathrm{ray}(x,y)=0$.

Conversely, let $d^\mathrm{ray}(x,y)=0$.
We note that $\|t\|$ does not vanish by assumption.
If $0=r_1\times r_2 = \tilde x\times R^\top\tilde y$, we have $\tilde x\times R^\top t=0$.
Hence, $\tilde x$, $R^\top\tilde y$, and $R^\top t$ are all located one line through the origin and neither does vanish.
Hence, \eqref{eq:lines}, which is equivalent to \eqref{eq:model}, is fulfilled for $w=\alpha \tilde x$ with any $\alpha\neq0$ and some $\lambda_l,\lambda_r\in\R$.
Otherwise, the lines intersect in one point $b_x=b_y$, and hence \eqref{eq:lines} is fulfilled with $w=b_x=b_y$.
In this case, $w$ satisfies~\eqref{eq:front}.
\end{proof}

\begin{remark}[Depth-regularized Ray Distance]
While Proposition~\ref{prop:ray_distance_pseudo_definite_property}
justifies the use of the ray distance theoretically, the computation of the ray distance is practically sensitive to perturbations of the camera parameters. 
To improve stability,
a common remedy is the use of regularization to include prior knowledge
\cite{benning2018modern}.
In most practical scenarios, we have prior information about the possible depth of the objects, e.g., if the camera is located in a room, the objects cannot be farther away than the walls.
Motivated by this, we consider lower and upper soft thresholds $\gamma_1<\gamma_2$ on the depth and some regularization parameter $\beta\ge0$.
Then, we modify our distance \eqref{eq:line_corrected_distance} to introduce the \textbf{depth-regularized ray distance} $d^\mathrm{reg}_{\beta,\gamma}(x,y)$ defined as
\begin{equation} \label{eq:corrected_distance_line_z}
    d^\mathrm{ray}(x,y)
    +
    \beta
    \begin{cases}
        (b-\gamma_1)^2 & {\rm{ if }} \; b<\gamma_1,\\
        0 & {\rm{ if }} \; b\in[\gamma_1,\gamma_2],\\
        (b-\gamma_2)^2 & {\rm{ if }}  \; b>\gamma_2,\\
    \end{cases}
\end{equation}
where 
$b \coloneqq \frac12 \langle b_x + b_y, e_3\rangle$ is the third coordinate of the midpoint of $b_x,b_y$ given in~\eqref{eq:closest_points}.
\end{remark}

\begin{remark}[Invariance to Rotations] \label{rem:rot}
    The ray distance is invariant to rotations of the camera with the same focal point.
    More specifically, taking a different right camera with parameters $R'\in SO(3)$ and $t'\in\R^3$ and the same focal point $R^\top t = (R')^\top t'$.
    Then the ray distance $d^\mathrm{ray}(x,y)$ 
    coincides with the ray distance between $x$ and $y'$
    with respect to the other camera, 
    where $ y'$  is 
    the normalized projection of $w\in\R^3$ 
    fulfilling $w=\lambda_{r}'(R')^\top \tilde y' - (R')^\top t'$, cf.\ \eqref{eq:lines}.
\end{remark}

\subsection{Epipolar Distance}\label{sec:epi}
The second distance arises from epipolar (half)lines, which are often used in the
literature, see, e.g. \cite{Darmon2021,Haming2007,Pollefeys1999}.
Substituting $w = \lambda_l \tilde{x}$ into the second equation in \eqref{eq:lines}
gives
\begin{equation}
\lambda_r \tilde{y} = \lambda_l R \tilde{x} + t.
\end{equation}
Taking the cross product of both sides with the translation vector $t = (t_1,t_2,t_3)^\top$ results in
\begin{equation} \label{eq:epi-cross}
t \times (\lambda_r \tilde{y}) = t \times (\lambda_l R \tilde{x}),
\end{equation}
which can be rewritten as
\begin{equation} 
\lambda_r T \tilde{y} = \lambda_l T R \tilde{x}, 
\qquad
T \coloneqq
\left(
\begin{array}{rrr}
0 & -t_3 & t_2 \\
t_3 & 0 & -t_1 \\
-t_2 & t_1 & 0
\end{array}
\right).
\end{equation}
Since the cross product is perpendicular to its generating vectors, we obtain by taking the inner product with $\tilde y$
the \textbf{epipolar constraint} 
\begin{equation} \label{eq:fundamental_constraint}
y^\top F x = 0, \qquad  F \coloneqq K_r^{-\top} T R K_l^{-1},
\end{equation}
where $F$ is called \textbf{fundamental matrix}.
The \textbf{epipolar line with respect to} $x \in \mathbb P^2$ in the right camera plane is given by
\begin{equation} \label{epiline}
\{(s_1,s_2)^\top: \langle Fx, s \rangle = 0 \}, \; s := (s_1,s_2,1)^\top.
\end{equation}
Geometrically, the epipolar line is the projection of the line  $\mathcal L_x$ in \eqref{eq:line_3d}
to the right camera plane, see Figure~\ref{fig:epipolar_geometry}.
Then the distance
of a point $y\in\mathbb P^2$ in the right image to the epipolar line \eqref{epiline}  of $x\in\mathbb P^2$ is given by
\begin{equation} \label{eq:di}
   d^\mathrm{epi}_r(x, y) 
   \coloneqq
   \begin{cases}\displaystyle
    \frac{|\langle F x,y\rangle|}{\|P_2{F x}\|} 
   &\quad \text{if} \quad
   P_2{F x}\neq 0,
   \\ 
      |(Fx)_3| &\quad \text{if} \quad
   P_2{F x}= 0,
   \end{cases}
\end{equation}
where $P_2{(x_1,x_2,x_3)}\coloneqq(x_1,x_2)$
and $(Fx)_3$ denotes the third component of the vector.
The second case is motivated by the fact that if $P_2Fx=0$, 
we have $Fx=(0,0,(Fx)_3)^\top$
and hence
$\langle Fx,y\rangle =(Fx)_3$ for all $y\in\mathbb P^2$. 
Then the epipolar line \eqref{epiline} degenerates to either the plane $\mathbb P^2$ if $(Fx)_3=0$,
or the empty set if $(Fx)_3\neq0$.
By the next remark, the case $P_2{F x}= 0$ cannot appear
for points in front of the camera and if the cameras do not see each other.

\begin{remark} \label{rem:special_case}
Assume that $P_2Fx=0$. This implies
$Fx=\alpha e_3$ with $\alpha=(Fx)_3$ and since $K_r$ is upper triangular further
\begin{align*}
F x &= K_r^{-\top} T R K_l^{-1} x = \alpha e_3,
\\
T R K_l^{-1} x &= \alpha (K_r)_{3,3} e_3,\\
T R w &= t \times R w = \lambda_l \alpha (K_r)_{3,3} e_3. 
\end{align*}
If $\alpha\neq0$, this means that
$$
\langle Rw,e_3\rangle 
=\langle t,e_3\rangle= \langle R^\top t, R^\top e_3 \rangle = 0,
$$
and consequently $\langle Rw+t,e_3\rangle=0$  which contradicts \eqref{eq:front}.
Geometrically, all such $w$ are in the plane  with normal direction $R^\top e_3$ 
through the focal point $-R^\top t$ of the second camera. 
If $\alpha=0$, then $w$ is a multiple of $-R^\top t$ 
and lies therefore on the line between the two focal points, see Figure~\ref{fig:epipolar_geometry}.
Then the epipolar line degenerates to a single point.
In practice, this case does usually not occur as it would mean that the second camera is visible in the first camera image. 
\end{remark}

Similarly, we can consider the distance
of a point $x\in\mathbb P^2$ in the left image to the epipolar line  of $y\in\mathbb P^2$ to obtain the \textbf{epipolar distance} in the left image
\begin{equation} \label{eq:dj}
   d^\mathrm{epi}_l(x, y) 
   \coloneqq
   \begin{cases}\displaystyle
      \frac{|\langle x, F^\top y\rangle|}{\|P_2{F^\top y}\|}
   &\quad \text{if} \quad 
   P_2{F^\top y}\neq 0,
   \\ 
   |(F^\top y)_3| &\quad \text{if} \quad
   P_2{F^\top y}= 0.
   \end{cases}
\end{equation}
Averaging over the epipolar distances in the left and right image, we get
the final \textbf{epipolar distance}
\begin{equation} \label{eq:epipolar_distance}
d^{\mathrm{epi}}(x,y) 
\coloneqq \frac{
d^{\mathrm{epi}}_l(x,y) +
d^{\mathrm{epi}}_r(x,y) }{2},
\end{equation}
in particular
\begin{align} 
d^{\mathrm{epi}}(x,y) 
=& 
\frac{|\langle Fx,y \rangle |}2
\left( \frac{1}{\|P_2{Fx}\|} + \frac{1}{\|P_2{F^\top y}\|}\right)\\
&\text{if}\quad P_2{F x}\neq0\text{ and }P_2{F^\top y}\neq0.
\end{align}

\begin{remark}[Epipolar rays]

Instead of the line~\eqref{eq:line_3d}, we could project the ray located in front of the camera to the image plane,
leading to an {epipolar ray}, cf.\ \cite{Darmon2021,Haming2007,Pollefeys1999}.
The condition~\eqref{eq:front}, which means a point is in front of both cameras, is equivalent to $\lambda_1,\lambda_2>0$ in~\eqref{eq:model}.
By~\eqref{eq:epi-cross}, we have
$$
\tfrac{\lambda_2}{\lambda_1}\, t \times \tilde{y} = t \times R \tilde{x},
$$
which holds for $\lambda_1\lambda_2>0$ if and only if
\begin{equation} \label{eq:half-epipolar-eq}
\langle t \times \tilde{y}, t \times R \tilde{x} \rangle > 0
\end{equation}
or $t\times\tilde y = t\times R\tilde x=0$.
We define for fixed $x\in\mathbb P^2$ the {epipolar ray} $H_x$ in the right image by
\begin{equation} \label{eq:half-epipolar}
\{y\in\mathbb P^2 : \langle y,Fx\rangle=0,\, \langle t\times K_r^{-1}y,t\times RK_l^{-1}x\rangle >0\}.
\end{equation}

We show that in most practical scenarios, where both cameras depict a similar region of the 3D space,
the epipolar rays and epipolar lines coincide inside the images.
More specifically, we assume the camera image is a square
$$
I_{a} \coloneqq 
[-a,a]^2 \times \{1\} \subset \mathbb P^2
,\qquad a>0.
$$
The {epipole} $y_\e\in\mathbb P^2$ is the projection of the focal point $w=0$ of the left camera to the right image
given by
\begin{equation} \label{eq:epipole}
\lambda_l x_\e = -K_l R^\top t.
\end{equation}
We assume that $y_\e$ is not visible in the right camera, i.e., $y_\e\notin I_{a}$,
and that $w\in\R^3$ is visible by both cameras with the projections $x\in \mathbb P^2$ and $y\in I_{a}$.
Then 
\begin{equation} \label{eq:half-line-intersection}
H_x\cap I_{a}
=
\{y\in I_{a} : \langle y,Fx\rangle =0\}.
\end{equation}
This can be seen as follows.
By definition, the left-hand side is a subset of the right. 
We show that \eqref{eq:half-epipolar-eq} is fulfilled for all $y\in I_{a}$ with $\langle Fx,y\rangle=0$.
The function $y\mapsto \langle t\times\tilde y,t\times R\tilde x\rangle$ is affine-linear, 
hence its zero set is affine-linear in $\mathbb P^2$
and it contains the epipole $y_\e$.
Therefore, $H_x$ is either a ray starting at the epipole $y_\e$ or empty if \eqref{eq:half-epipolar-eq} vanishes on the whole epipolar line. 
Hence, the intersection $H_x\cap I_{a}$ is either $I_{a,b}\cap \{y\in\mathbb P^2: \langle y,Fx\rangle=0\}$ or empty.
The latter cannot hold, as the set contains the projection of~$w$.
\end{remark}

\section{Optimal Transport} \label{sec:OT}
We recall basic notions of discrete OT and its partial version as well as hierarchical OT, cf.\ \cite{COTFNT}. 
While we formulate the OT problem for discrete measures on the projective plane $\mathbb P^2$, the theory in this section also applies to continuous probability measures on general manifolds.

\subsection{Point Matching via Optimal Transport}
Given two sets
\begin{align} \label{eq:point_clouds}
X&=\{x^1,\dots,x^N\}\in(\mathbb P^2)^N ,\\
Y&=\{y^1,\dots,y^M\}\in(\mathbb P^2)^M,
\end{align}
we are interested in the optimal transport costs between the empirical measures
\begin{equation} \label{eq:measures}
\mu = \frac1N \sum_{i=1}^{N} \delta_{x^i} 
\quad \text{and}\quad 
\nu = \frac1M\sum_{i=1}^{M} \delta_{y^i}, 
\end{equation}
where $\delta_x$ is the point measure at $x$,
defined by the discrete \textbf{OT distance}
\begin{equation}\label{eq:OT_one_to_one}
\begin{aligned}
&\text{OT}_c(X,Y) \coloneqq \min_{\Pi \ge 0} \, 
\langle C,\Pi \rangle\\
 \text{subject to} \quad
&\Pi \mathbf{1}_M = \frac1N \mathbf{1}_N, \;
\Pi^\top \mathbf{1}_N= \frac1M \mathbf{1}_M,
\end{aligned}
\end{equation}
where $c\colon\mathbb P^2\times\mathbb P^2\to[0,\infty[$ is a cost or ``distance function'' on $\mathbb P^2$,
$$
C \coloneqq \left( c(x^i,y^j) \right)_{i,j=1}^{N,M}
\quad \text{and}\quad 
\Pi \coloneqq \left(\pi_{i,j} \right)_{i,j=1}^{N,M}.
$$
The matrix $\Pi \in \R_{\geq 0}^{N \times M}$ is called a transport plan.
In our applications, we will deal with the ``distances''
$d^{\mathrm{ray}}$, $d^\mathrm{reg}_{\beta,\gamma}$ and
$d^{\mathrm{epi}}$
from the previous section.
If $N=M$, there exists an optimal solution $\Pi$ such that  $N \Pi$
is a permutation matrix associated with a permutation $\sigma\in\operatorname{Perm}(N)$
and 
$
\text{OT}_c(X,Y) = \sum_{i=1}^N c(x^i, y^{\sigma(i)} ) $, see \cite[Prop~2.1]{COTFNT}.

However, in a practical stereo matching application, 
object occlusions or modality-dependent feature trackers
might violate the assumption of the balanced setup $N = M$.
Clearly, for $N \not = M$, a transport plan cannot be realized by a permutation matrix.
Therefore, we consider the more general \textbf{partial optimal transport} (POT) problem \cite{caffarelli2010free, Chapel2020}, which explicitly restricts the total transported mass 
and is defined as
\begin{align} \label{eq:POT_matching}
&\operatorname{POT}_c(X,Y)
\coloneqq  \min_{\Pi \ge 0} \langle C, \Pi \rangle\\
& \text{subject to} \quad {\mathbf{1}_N^\top \, \Pi \, \mathbf{1}_M = m,}\\
&\quad 
\Pi \mathbf{1}_M \leq \frac1N {\mathbf{1}_N} 
, \quad 
\Pi^\top \mathbf{1}_N \leq \frac1M { \mathbf{1}_M},
\end{align}
{for a mass constraint $m\in[0,1]$.}
While it simplifies to \eqref{eq:OT_one_to_one} for $N=M$ {and $m=1$}, it allows excess mass to be discarded for $N \neq M$. 
{Consequently, POT is especially well-suited for computing a partial correspondence between point clouds resulting from incompatible landmarkers.}
If $m={\min\{N, M\}}/{\max\{N, M\}}$,
there exists an optimal solution $\Pi/m \in \{0, 1\}^{N \times M}$ such that $\Pi$ is associated with a `partial' permutation between the smaller set and an equal-sized subset of the larger set, see \cite[Thm. 4.1]{bai2023sliced}.

Beyond partial matching, one might alternatively replace POT with the closely related and more general \textbf{unbalanced optimal transport} formulation \cite{beier2023unbalanced,sejourne2023unbalanced}, where one regularizes the transport problem with respect to a fixed probability divergence, see \cite[Appendix~A]{stein2025} for examples.
{However, the latter has two regularization parameters that need to be chosen appropriately.}

\subsection{Object Matching via Hierarchical Optimal Transport}
\textbf{Hierarchical optimal transport} (HOT) refines the constraints of the original OT problem by introducing additional labels.
In the following, let $[N] \coloneqq \{1,\ldots,N\}$.
Now the goal is  to match full objects each consisting of many points.
Given $N$ source objects $X_i$ and $M$ target objects $Y_j$, let
\begin{align} \label{eq:objects}
&\boldsymbol{X}=\{X_1,\dots,X_N\}
\quad \text{and} \quad
\boldsymbol{Y}=\{Y_1,\dots,Y_M\}
\\
&\text{with} \quad
X_i = \{x_i^1, \dots, x_i^{N_i}\} \in(\mathbb P^2)^{N_i}, \;
i\in [N],
\\
&\hphantom{with} \quad
Y_j = \{y_j^{1}, \dots, y_j^{M_j} \} \in(\mathbb P^2)^{M_j},
\;
j\in [M].
\end{align}
We use a two-step hierarchical strategy, inspired by hierarchical Wasserstein distance formulations \cite{alvarezmelis2020geometricdatasetdistancesoptimal, delon2020wassersteintypedistancespacegaussian, yurochkin2019hierarchical}. 
In the case $N=M$ and a Euclidean cost function, our hierarchical OT formulation computes a discretized version of the so-called Wasserstein over Wasserstein \cite{bonet2025flowing,piening2025slicing_wow} or
the mixture Wasserstein distance \cite{delon2020wassersteintypedistancespacegaussian,piening2025slicing}.
\\[1ex]
\textbf{Step 1 (Local pointwise matching between objects):}
For each pair  $(X_i, Y_j)$, $i\in[N]$,\ $j\in [M]$, we solve the POT problem \eqref{eq:POT_matching} to obtain
\begin{equation} \label{eq:hot_step1}
    c^{\mathrm{obj}}(X_i,Y_j) \coloneqq 
    \mathrm{POT}_c(X_i, Y_j).
\end{equation}
\textbf{Step 2 (Global matching):}
Once the object-to-object cost $c^{\mathrm{obj}}$ has been computed, we solve 
a second POT problem 
at the object level: 
\begin{equation} \label{eq:hot_step2}
    \mathrm{POT}_{c^{\mathrm{obj}} } (\boldsymbol{X}, \boldsymbol{Y}).
\end{equation}

The hierarchical matching procedure is summarized in Algorithm~\ref{algo:hierarchical_matching}. 
In the case of a balanced matching using \eqref{eq:OT_one_to_one}, we refer to this procedure as \textbf{HOT}. If we employ the POT formulation \eqref{eq:POT_matching}, we use the name \textbf{HOT-POT}.

\begin{algorithm}[!ht]
\caption{Hierarchical Object Matching} \label{algo:hierarchical_matching}
 \KwIn{
    Sets of $N$ source objects $\boldsymbol{X}$ and $M$ target objects $\boldsymbol{Y}$\\
    Cost function $c$ between points in $\mathbb P^2$\\
     }
\KwOut{Binary matching matrix $ \Pi^\mathrm{obj}$}

\ForEach{$i \in [N], \,  j \in [M]$}
    {Compute pointwise POT cost
    \hspace{1em} 
    $c^{\mathrm{obj}}(X_i,Y_j) = \mathrm{POT}_c (X_i, Y_j)$}

Compute transport plan $\Pi^{\mathrm{obj}}$ minimizing
\hspace{1em} $\mathrm{POT}_{c^{\mathrm{obj}}}(\boldsymbol{X}, \boldsymbol{Y})$

\Return{$\Pi^{\mathrm{obj}}$}
\end{algorithm}

\paragraph*{Recovering a global pointwise map}
From HOT or HOT-POT, we may again compute a global point matching. Let 
$N_{\mathrm{tot}}=\sum_{i=1}^N N_i$ and $M_{\mathrm{tot}}=\sum_{j=1}^M M_j$,
and denote by ${ \Pi^{i,j} \in \R^{N_i \times M_j}}$ a pointwise POT plan between 
$X_i$ and $Y_j$, and by $\Pi^{\mathrm{obj}}$ the object-level plan.  
The global plan 
$\Pi^{\mathrm{glob}}\in\mathbb{R}_{\ge0}^{N_{\mathrm{tot}}\times M_{\mathrm{tot}}}$ 
is obtained by embedding each local plan ${ \Pi^{i,j}}$ into its
corresponding block and scaling it by the transported mass $\Pi^{\mathrm{obj}}_{i,j}$:
We set
\begin{equation}\label{eq:global_plan}
\Pi^{\mathrm{glob}}_{(i,r),(j,s)}
\coloneqq
\Pi^{\mathrm{obj}}_{i,j}\,
  \Pi^{i,j}_{r, s},
\end{equation}
as the mass transported between $x_r^i$ and $y_s^j$.
Our resulting pointwise plan $\Pi^{\mathrm{glob}}$ becomes binary if all input plans are binary.

\section{Algorithmic Considerations} \label{sec:algo}

In this section, we discuss several aspects of the implementation of our algorithms as well as error metrics. We will deal both with pointwise  and objectwise matching.
\subsection{OT Algorithms}
\textbf{OT.}
For computing $\text{OT}_c(X,Y)$  with $c \in \{d^{\text{epi}},d^{\text{ray}},d^\mathrm{reg}_{\beta,\gamma}\}$,
we apply the {Earth Mover's Distance} algorithm \cite{Bonneel2011} implemented in the PythonOT library \cite{flamary2021pot}.
It returns a permutation matrix, but does not guarantee uniqueness and can be sensitive to input order.
\\[1ex]
\textbf{POT.}
For partial OT, use the solver \texttt{ot.partial.partial\_wasserstein}  \cite{caffarelli2010free,Chapel2020} from PythonOT, which implements a relaxed optimal transport formulation, with a partial mass constraint $m={\min\{N, M\}}/{\max\{N, M\}}$.
\\[1ex]
\textbf{HOT/HOT-POT.}
For our hierarchical matching procedure, we utilize Algorithm~\ref{algo:hierarchical_matching} based on the aforementioned PythonOT solvers.

\begin{remark}[Projecting onto Binary Matrices]
While we know that our OT and POT problems can be solved by scaled (partial) permutation matrices, see \cite[Prop. 2.1]{COTFNT}  and \cite[Thm. 4.1]{bai2023sliced},
practical solvers relying on continuous relaxations may return soft transport plans.
In that case, we project them to binary matrices by assigning each point to the maximizing index only if at least half of the mass is concentrated there, and discarding it otherwise.
\end{remark}

\begin{remark}[Naive Matching]\label{rem:naive}
As baseline for comparing the performance of the OT and POT algorithms 
we use a \textbf{naive matching} procedure between two sets of points $X = \{x^1, \ldots, x^N\}$ and $Y = \{y^1, \ldots, y^M\}$.
We find the smallest value within the cost matrix $C_{ij} = c(x^i, y^j)$, take the respective indices $i^*,j^*$ for our matching, and remove the row $i^*$ and the column $j^*$.
As the smallest value might be non-unique, we take the first occurrence.
We repeat this procedure until there is no row or column is left and obtain in total $\min(N,M)$ matches.
\end{remark}

\subsection{Evaluation Criteria}
\label{sec:matching_rate}
\paragraph*{Comparing with Ground-Truth Matching}
Given a point cloud imaged by two different cameras, we can directly calculate the \textbf{pointwise mismatch rate} as the number of incorrectly matched point pairs divided by the total number of point pairs:
\[
\frac{\#\text{incorrect point matches}}{\min\{N,\,M\}}.
\]
For our parameter $m={\min\{N, M\}}/{\max\{N, M\}}$, the total number of matches is $\min\{N,M\}$.

If we have multiple objects and each is described by a point cloud in the left and a point cloud in the right camera, we can perform object matching via HOT or HOT-POT.
We calculate the \textbf{objectwise mismatch rate} as the number of incorrectly matched object pairs divided by the total number of object pairs:
\[
\frac{\#\text{incorrect object matches}}{\min\{N,M\}}.
\]
In our multi-modal landmarking system, each object is a single face that is described by i) the landmarks of an RGB tracker on the left and ii) a thermal landmarker on the right camera. 
For example, if the RGB tracker detects 5 faces and the thermal one detects 4, the total number of object pairs becomes 4. If our algorithm matches 2 face pairs correctly, the object mismatch ratio becomes $2/4=0.5$.

\paragraph*{Evaluating Matching based on 3D Reconstruction.} \label{sec:unsupervised_eval}
In some applications, we might be more interested in reconstructing the true 3D point cloud than in recovering the exact point-to-point matching. 
In other scenarios, we might want to evaluate the pointwise
matching quality in the absence of a ground-truth correspondence, e.g., due to occlusions.
In this setting, an object is represented by two incompatible 3D point clouds, one visible on the left camera and one on the right. 

In both cases, we can evaluate this setup based on the 3D reconstruction of our 3D point clouds. Given a pair $(x, y) \in \mathbb P^2 \times \mathbb P^2$, 
we can solve \eqref{eq:model} for $w \in \R^3$ if we know the ground truth camera parameters and $r_x \times r_y \neq 0$, see Prop.~\ref{prop:ray_distance_pseudo_definite_property}. 
Thus, we can reconstruct (triangulate) a 3D point cloud and compare the \textbf{(squared) Wasserstein-2 distance} \cite{COTFNT} between some ground truth point cloud and the reconstructed point cloud, i.e., we compute the minimum in  \eqref{eq:OT_one_to_one} with the squared Euclidean distance in $\R^3$ as our cost function $c$ (up to rescaling).

\section{Numerical Experiments} \label{sec:Numerical_Experiments}
In our numerical experiments, 
we first perform experiments on synthetic data to allow for a quantitative comparison between the ray and the epipolar distance. 
Afterwards, we extend our analysis to real-world landmarking data.
The code used to reproduce the numerical experiments will be made publicly available upon publication.

\subsection{Synthetic Faces Dataset} \label{sec:Synth_Faces}

\paragraph*{Dataset} 
We use an artificially created dataset of 3D points from four human faces, see Figure~\ref{fig:3d_landmarks_faces}. 
The full dataset contains 1872 points corresponding to four 3D faces, each composed of the 468 landmarks of the MediaPipe canonical face model \cite{lugaresi2019mediapipe}, see Figure~\ref{fig:3d_full}.
The subsampled dataset consists of 65 points per face, {corresponding to 
averaged landmarks of the 3D faces}, see Figure~\ref{fig:3d_sub}. 
For that purpose, the 468 landmarks are downsampled to 65 points by partitioning each region's sorted vertex indices into fixed numbers of chunks and taking the 3D centroid of each chunk. 
We know the ground truth correspondences, meaning that for each point in the left camera, the corresponding point in the right camera is known.
Moreover, we have access to four distinct face labels as employed in our HOT formulation \eqref{eq:hot_step1}--\eqref{eq:hot_step2}.

The camera projections $x$ and $y$ computed via the model \eqref{eq:model_general} are shown in Figure~\ref{fig:2d_landmarks_cameras}. 
In particular, note that the projected faces partially overlap. 
Indeed, such scenarios may appear in practice since some trackers predict landmarks for occluded face regions via interpolating the face geometry \cite{deng2019menpo} or motion in videos \cite{sun2019fab}.

\begin{figure}[!ht]
    \centering
    \begin{subfigure}[b]{0.48\linewidth}
        \centering
        \includegraphics[width=\linewidth]{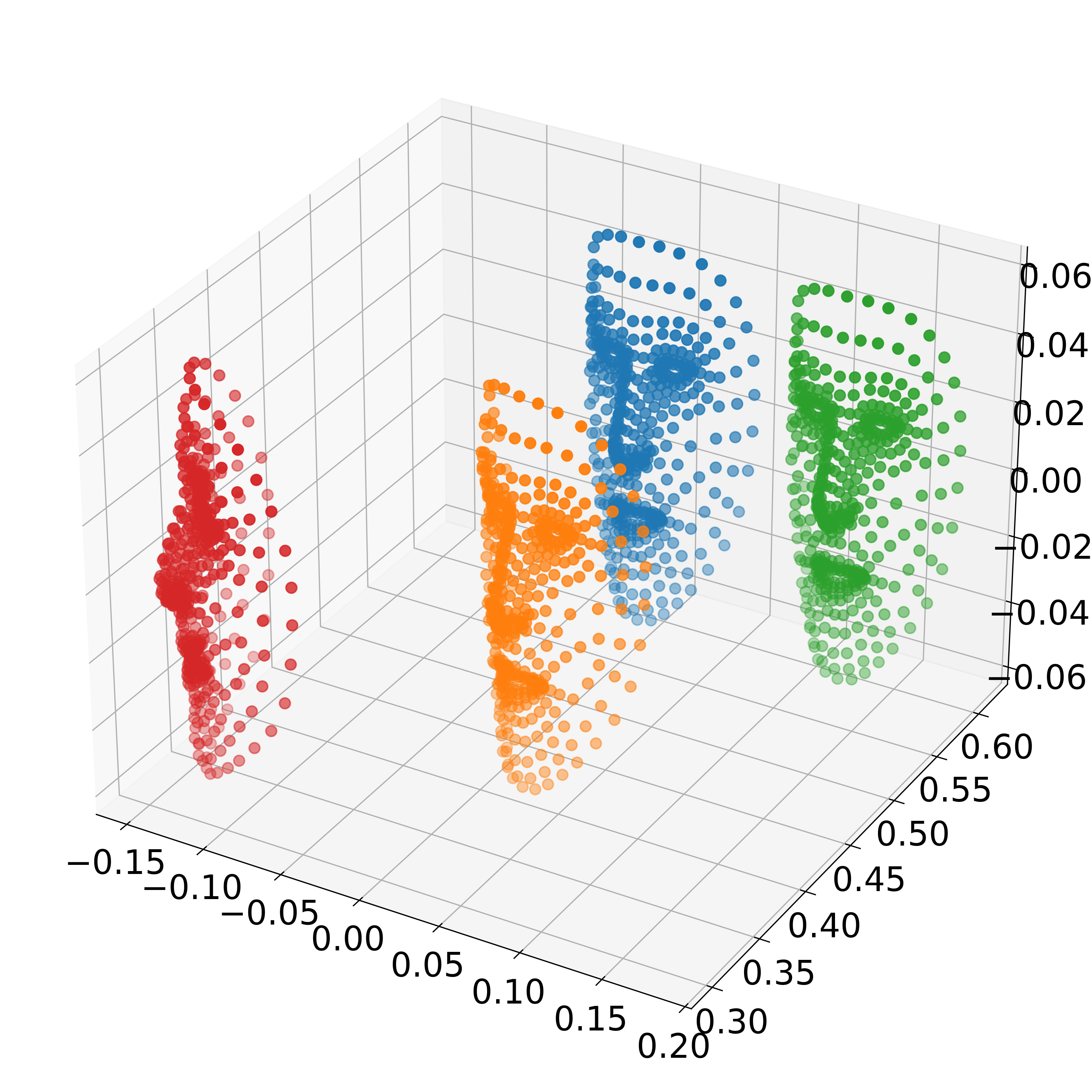}
        \caption{Full (1872 pts)}
        \label{fig:3d_full}
    \end{subfigure}
    \hfill
    \begin{subfigure}[b]{0.48\linewidth}
        \centering
        \includegraphics[width=\linewidth]{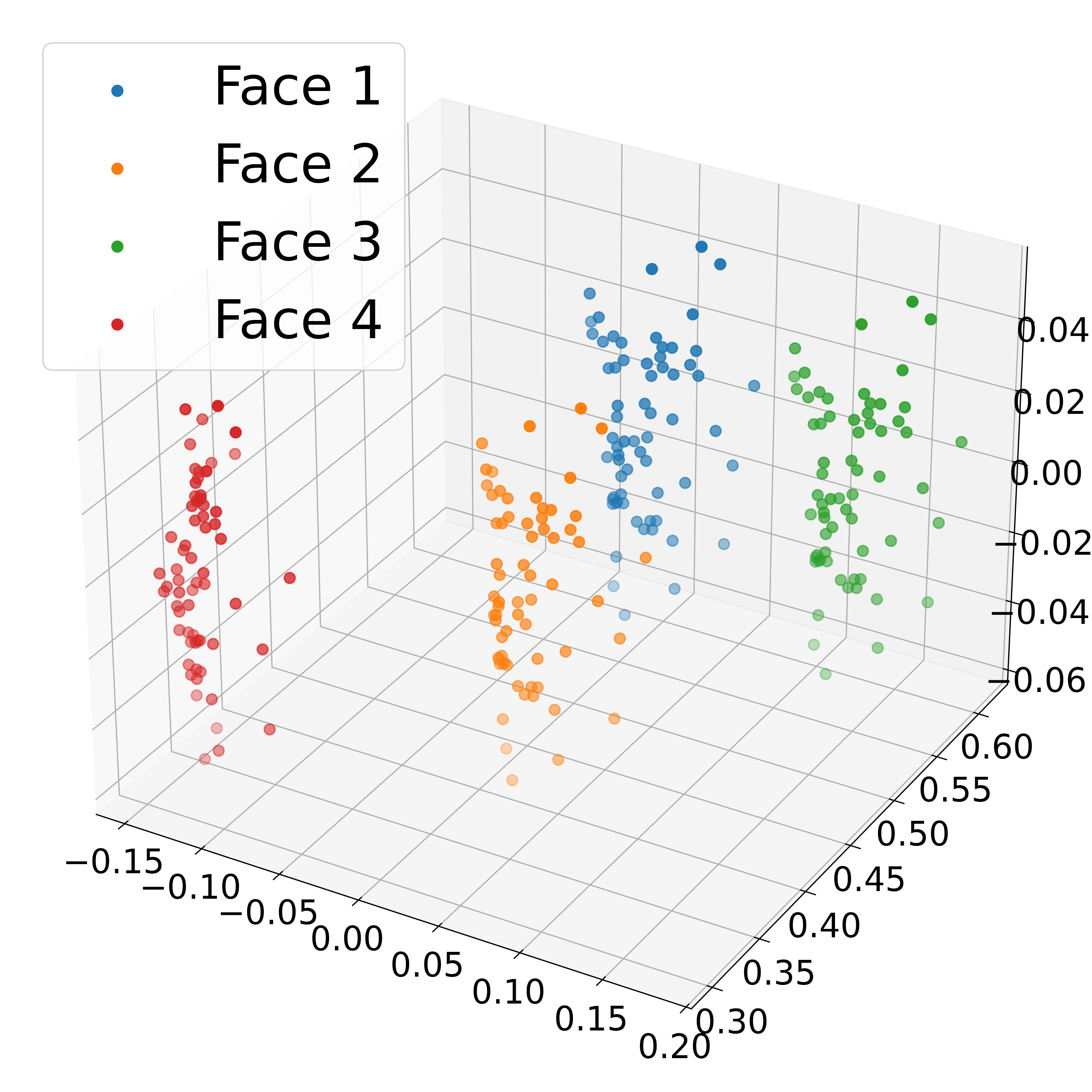}
        \caption{Subsampled (260 pts)}
        \label{fig:3d_sub}
    \end{subfigure}
    \caption{3D landmarks $w$ of four faces.}
    \label{fig:3d_landmarks_faces}
\end{figure}

\begin{figure*}[!ht]
    \centering
    \begin{subfigure}[b]{.48\linewidth}
        \centering
        \includegraphics[width=\linewidth]{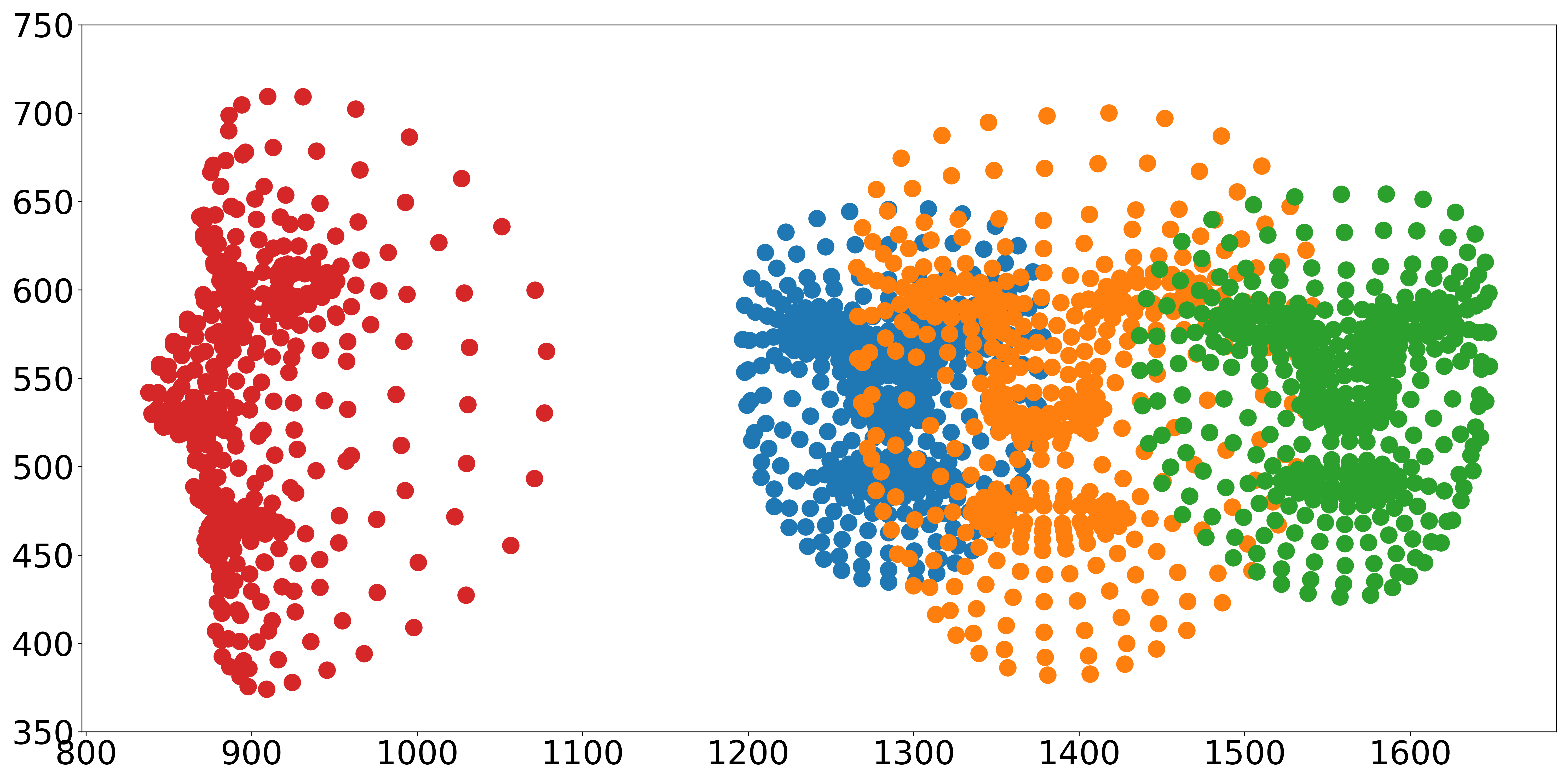}
        \caption{Left camera (points $x_i$)}
        \label{fig:2d_camleft}
    \end{subfigure}
    \hfill
    \begin{subfigure}[b]{.48\linewidth}
        \centering
        \includegraphics[width=\linewidth]{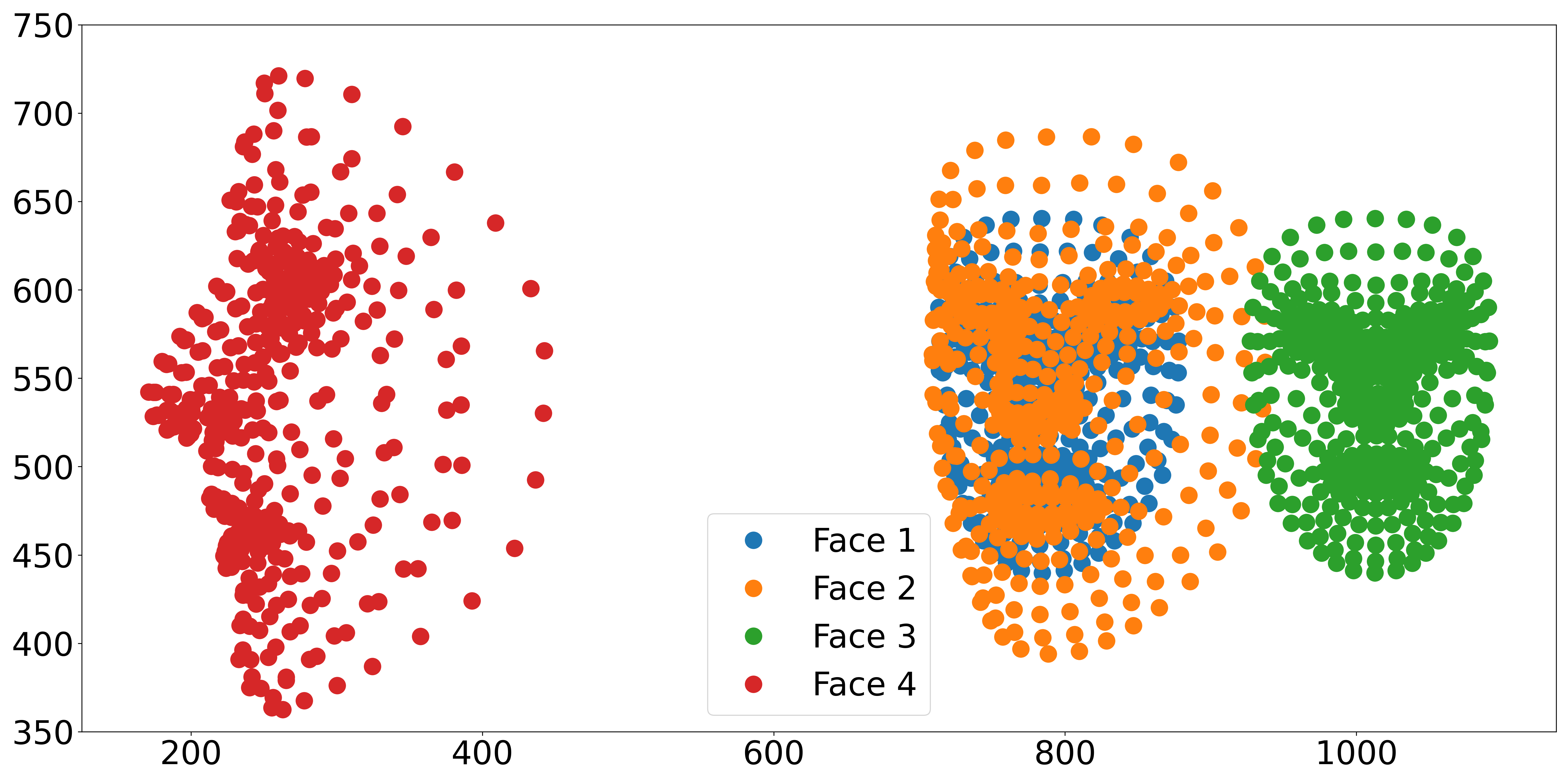}
        \caption{Right camera (points $y_j$)}
        \label{fig:2d_camright}
    \end{subfigure}
    \caption{2D camera images of the four faces dataset from Figure~\ref{fig:3d_full}.}
    \label{fig:2d_landmarks_cameras}
\end{figure*}
\begin{table}[htbp]
    \centering
    \begin{tabular}{|l|c c c|c c c|}
    \hline
    \textbf{$c$} & \multicolumn{3}{c|}{Full ($N=1872$)} & \multicolumn{3}{c|}{Sub ($N=260$)} \\
     & Naive & OT & HOT & Naive & OT & HOT \\
    \hline
    \multirow{2}{*}{$d^{\text{epi}}$}
     & 662 & 681 & 406 & 66 & 64 & 0 \\
     & 35\% & 36\% & 22\% & 25\% & 25\% & 0\% \\
    \hline
    \multirow{2}{*}{$d^{\text{ray}}$}
     & 809 & 284 & 264 & 96 & 0 & 0 \\
     & 43\% & 15\% & 14\% & 37\% & 0\% & 0\% \\
    \hline
    \end{tabular}
    \caption{ Point mismatch counts and ratios for the synthetic faces dataset.}
    \label{tab:OT_Synthfaces}
\end{table}

\paragraph*{Point Matching via OT and HOT with Ground Truth}
We start by performing pointwise matching between the landmarks using the OT formulation \eqref{eq:OT_one_to_one} without any face labels. Table~\ref{tab:OT_Synthfaces} reports the mismatch rates of the OT matching and the naive matching in Remark~\ref{rem:naive} for our two distances. 
Overall, the results with the epipolar distance~\eqref{eq:epipolar_distance} are worse than with the ray distance~\eqref{eq:line_corrected_distance}. 
For the latter, OT leads to a considerable improvement over naive matching,
reducing the number of errors from 809 to 284 for the full dataset, and from 96 to 0 for the subsampled data. 
Geometrically, the poor performance with the epipolar distance can be explained by the fact that points from \textit{face 3} lie very close to epipolar lines \eqref{epiline} corresponding to \textit{face 1}, see Figure~\ref{fig:epipolar_13}. 

If we include the face labels and employ the induced point-to-point HOT transport plan \eqref{eq:global_plan}, we see a drastic improvement for the epipolar-based matching in Table~\ref{tab:OT_Synthfaces}. Nevertheless, the ray distance still leads to better results. 

Figure~\ref{fig:cost_matrix} shows the cost matrices.
For the epipolar distance, the cost nearly vanishes not only along the diagonal, but also along two sub-diagonals corresponding to the association between face 1 and face 3. In contrast, the ray distance can  distinguish faces 1 and 3 more effectively. 

\begin{figure}
    \centering
    \includegraphics[width=\linewidth]{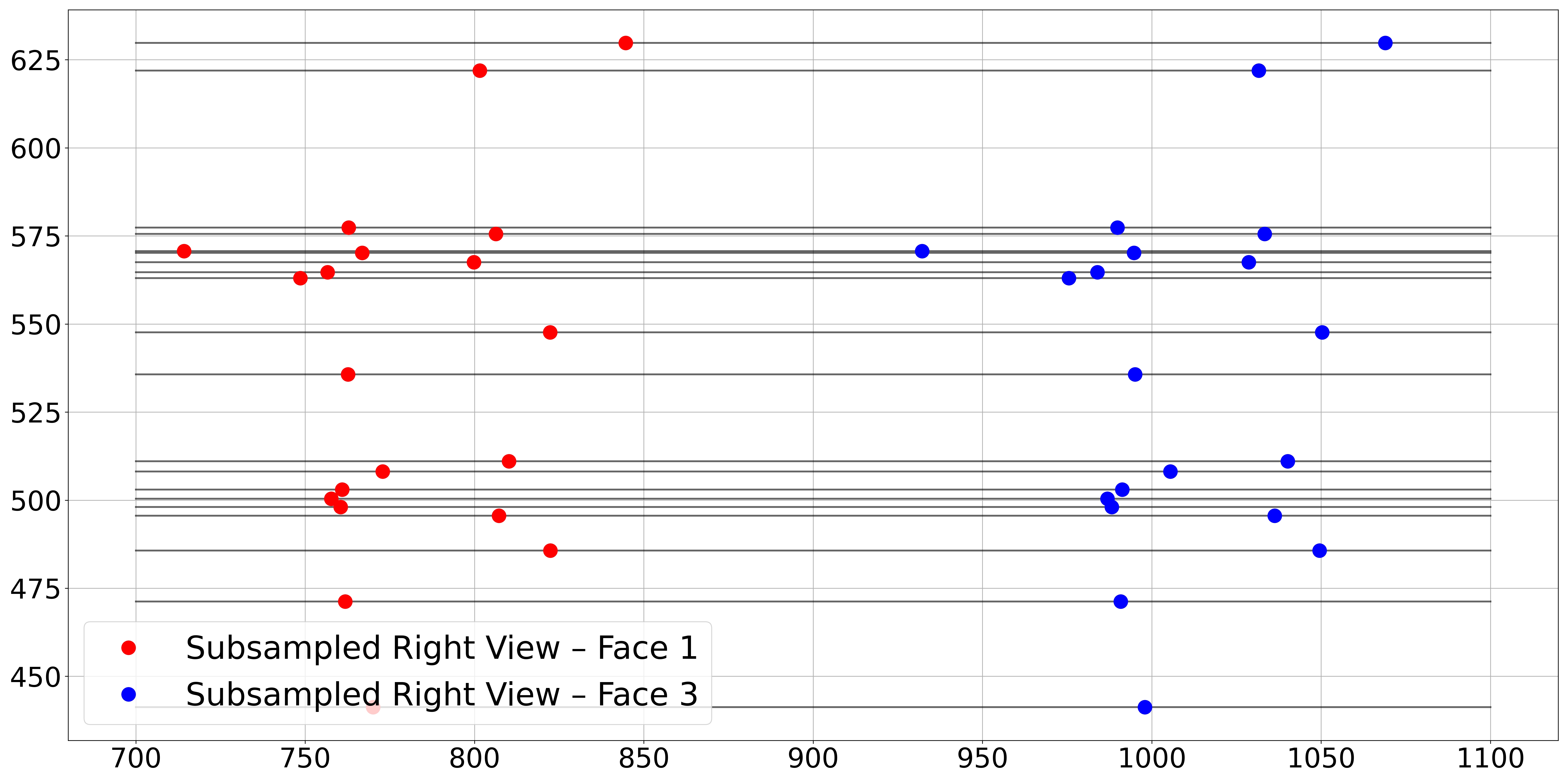}
    \caption{Projection of 20 points from synthetic faces 1 (red) and 3 (blue) onto the right camera and epipolar lines of face 1, illustrating the difficulty of distinguishing faces with $d^{\mathrm{epi}}$ when two points are on a line.}
    \label{fig:epipolar_13}
\end{figure}

\begin{figure}[!htb]
\centering

\begin{subfigure}[t]{0.49\linewidth}
\hfill
    \centering
    \includegraphics[width=\linewidth,trim={0 65 0 65},clip]{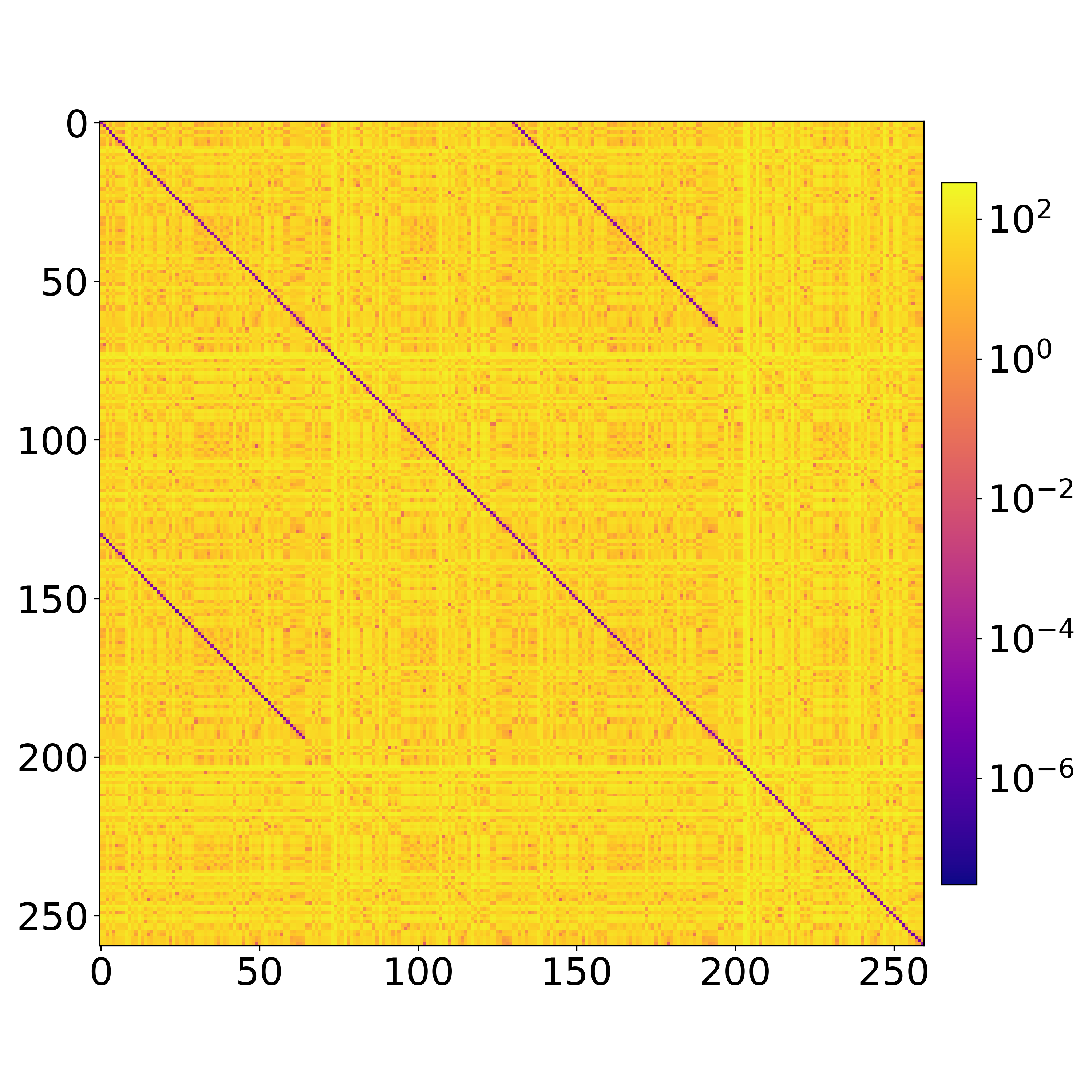}
    \caption{Cost: $d^{\text{epi}}$}
    \label{fig:cost_epipolar}
\end{subfigure}
\hfill
\begin{subfigure}[t]{0.49\linewidth}
    \centering
    \includegraphics[width=\linewidth,trim={0 65 0 65},clip]{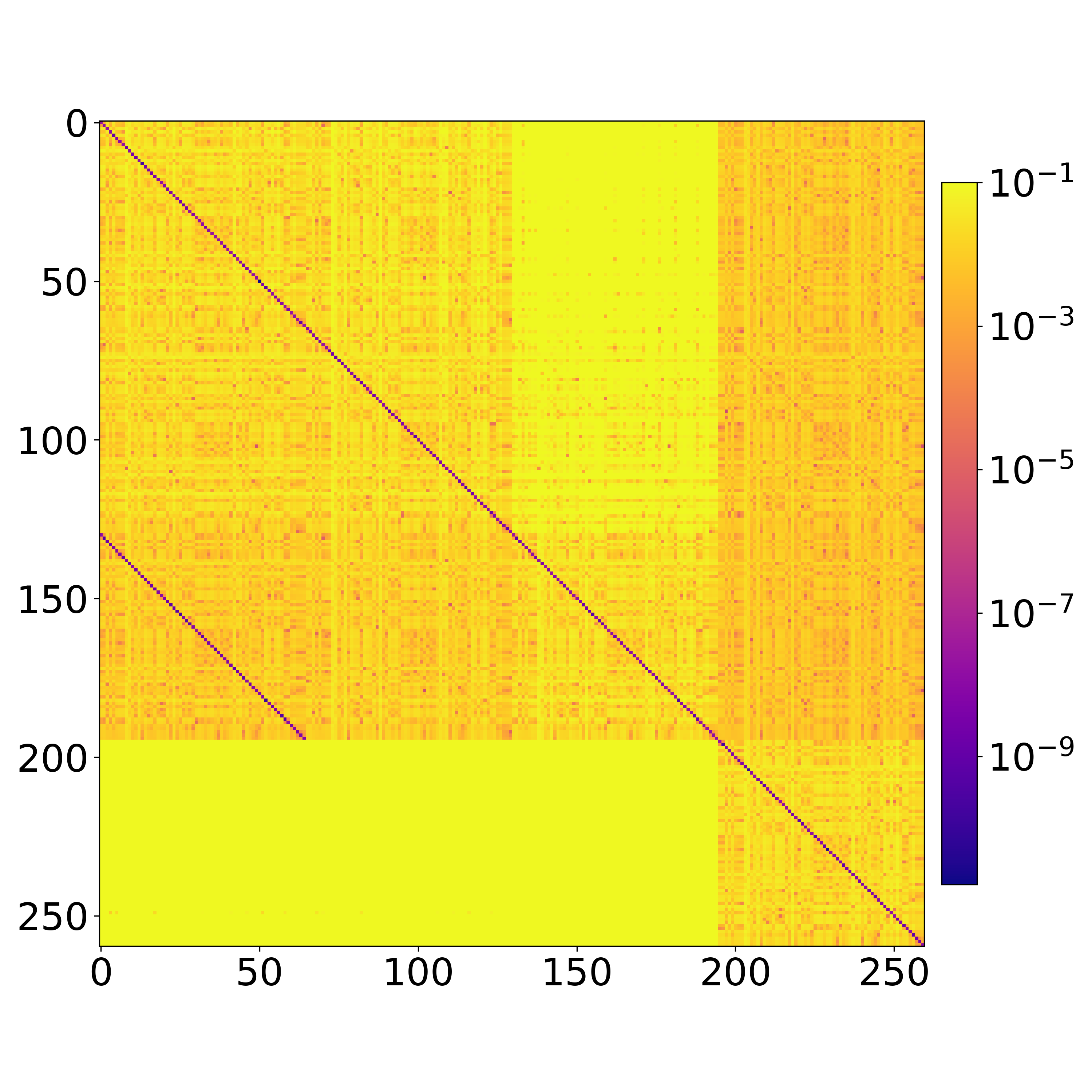}
    \caption{Cost: $d^{\text{ray}}$}
    \label{fig:cost_cline}
\end{subfigure}
\hfill
\caption{Cost matrices $[d(x_i,y_j)]_{ij}$ for the two geometric distances on the subsampled synthetic faces from Figure~\ref{fig:3d_sub}.  
The epipolar cost exhibits low off-diagonal entries linking face~1 and face~3, while the ray cost suppresses these cross-face connections.}
\label{fig:cost_matrix}
\end{figure}

\paragraph*{Point Matching without Ground-Truth via POT}
Next, we investigate the matching of all 1872 on the left camera and the subsampled 260 points on the right camera to assess the partial matching approach.
Using the epipolar distance, 27.3\,\% of all points are matched to the wrong face. For the ray distance, this percentage goes down to 12.3\,\%.
Following Section~\ref{sec:unsupervised_eval}, the squared Wasserstein-2 distance between the reconstructed 3D point clouds 
is reported in Table~\ref{tab:POT_Synthfaces}.

\begin{table}[htbp]
    \setlength{\tabcolsep}{9pt}
    \begin{tabular}{l c c}
    \hline
    \textbf{$c$} & \multicolumn{1}{c}{Full $\rightarrow$ Sub} & \multicolumn{1}{c}{Sub $\rightarrow$ Full} \\
    \hline
    \multirow{1}{*}{$d^{\text{epi}}$}
     & 1.39 & 1.45 \\
    \multirow{1}{*}{$d^{\text{ray}}$}
     & 0.75 & 0.13 \\
    \hline
    \end{tabular}
    
    \caption{Squared Wasserstein-2 distance for the synthetic faces dataset with POT,  lower is better.}
    \label{tab:POT_Synthfaces}
\end{table}

\paragraph*{Face Matching via HOT-POT}
Lastly, we extend the partial matching comparison to object matching via HOT-POT by using the face labels for all points on one camera and the subsampled points on the other camera. 
With this approach, the object-wise mismatch rates are 0\,\% for both distances, i.e., all faces are correctly matched. This holds for both setups, i.e., with the subsampled data on the left and the full data on the right and vice versa.

\subsection{Synthetic Spheres Simulation}
\begin{figure}[!tbh]
    \centering
    \begin{subfigure}[t]{\linewidth}
        \centering
    \includegraphics[width=\linewidth,trim={0 35 10 20},clip]{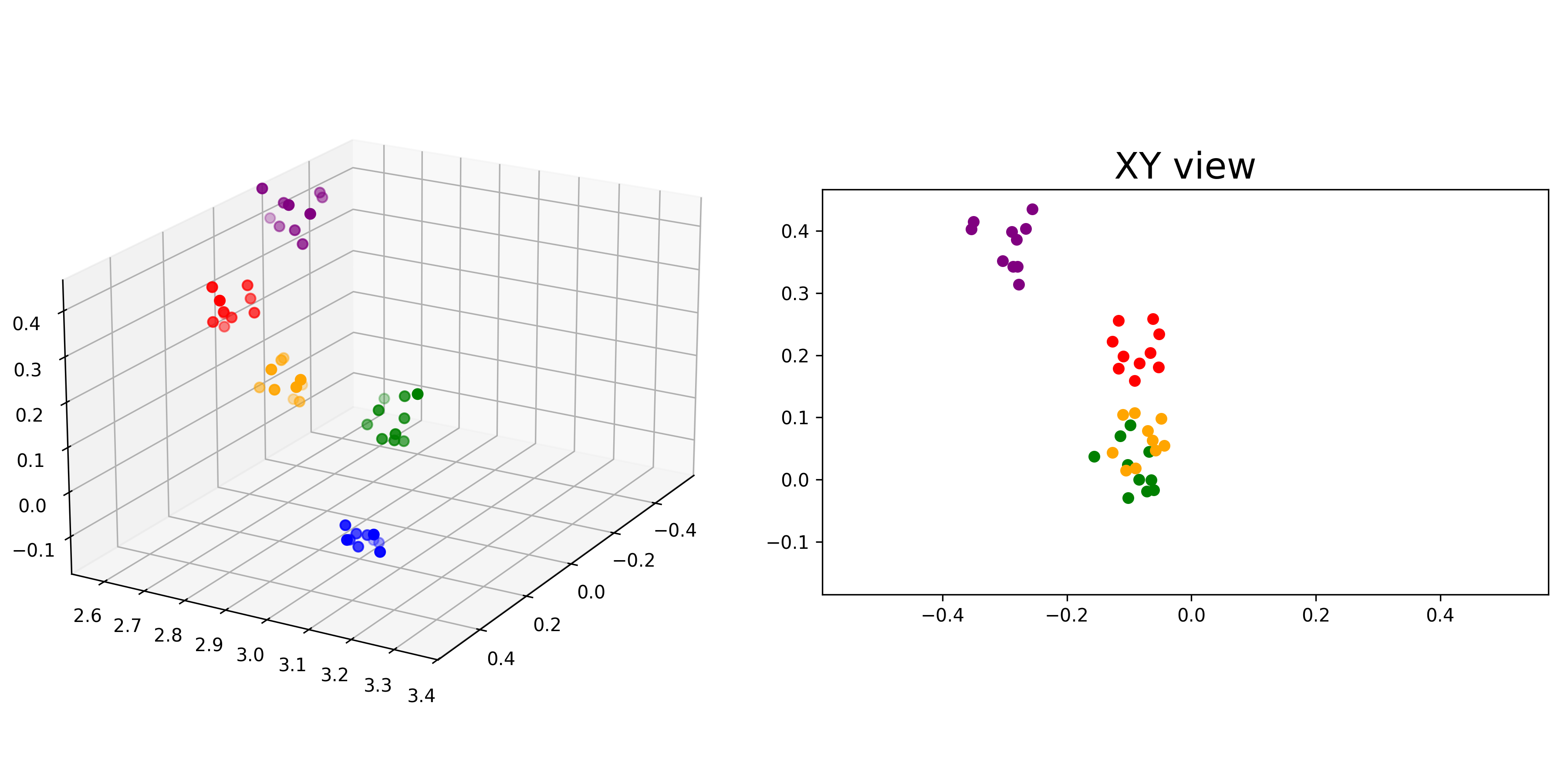}
        \caption{3D Point Cloud}
        \label{subfig:3d_spherical_sim}
    \end{subfigure}
    \begin{subfigure}[t]{0.4\linewidth}
    \includegraphics[width=\linewidth,trim={0 8 0 10},clip]{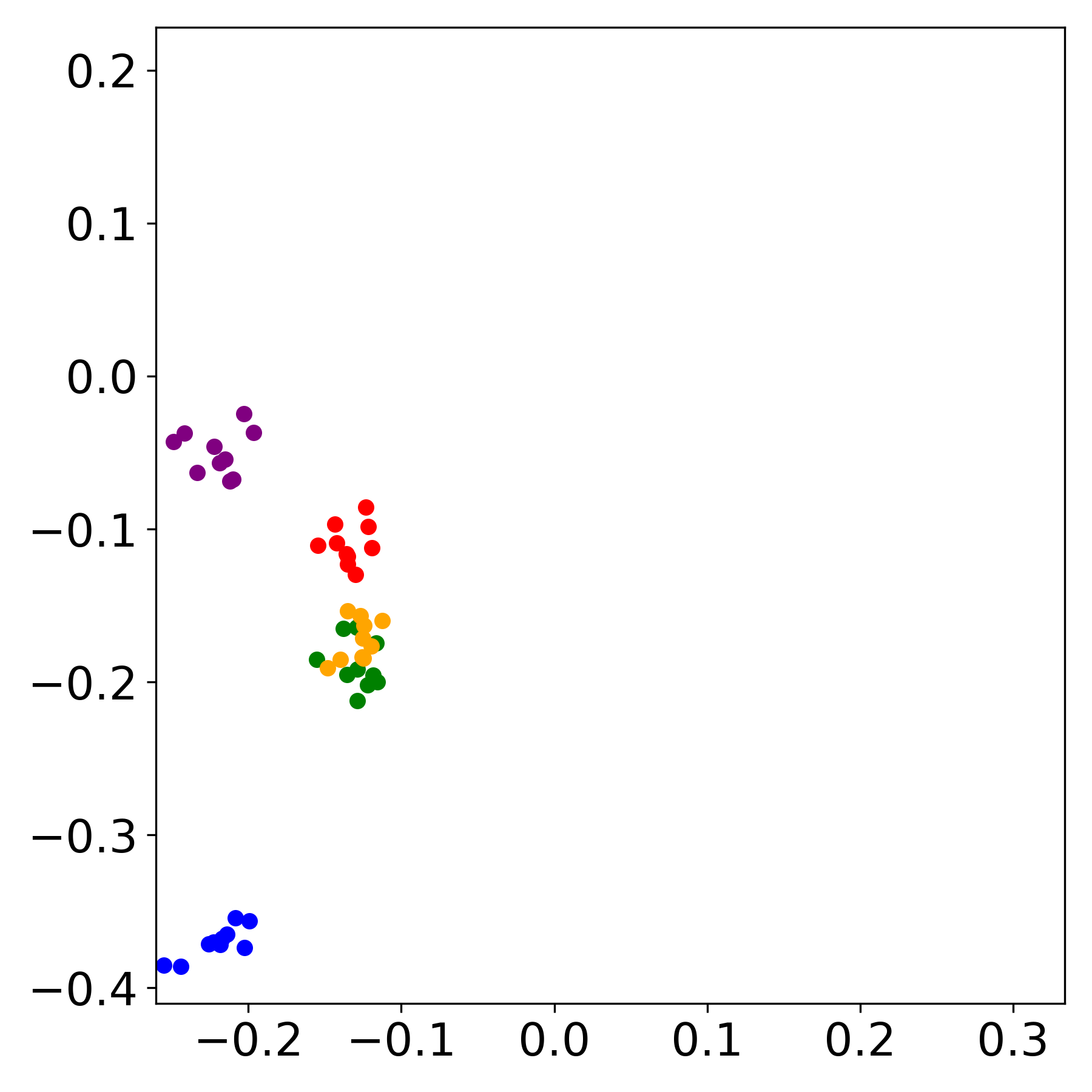}
        \caption{Left Camera}
            \label{subfig:left_spherical_sim}
    \end{subfigure}
    \quad
    \begin{subfigure}[t]{0.4\linewidth}
        \centering
    \includegraphics[width=\linewidth,trim={0 8 0 10},clip]{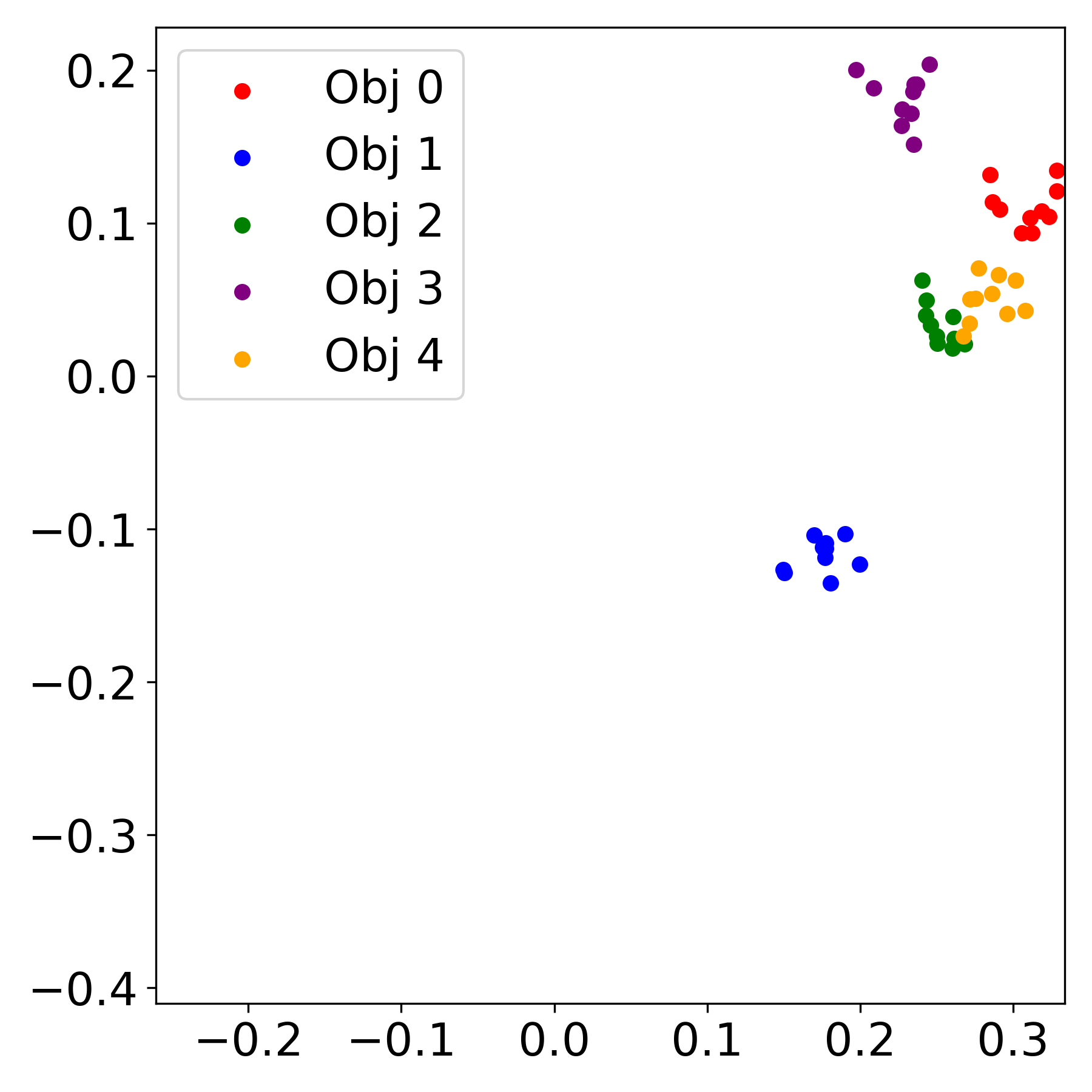}
        \caption{Right Camera}
            \label{subfig:right_spherical_sim}
    \end{subfigure}
    \caption{Spheres simulation with $\sigma=0.005$. (a): Original data visualized as a 3D scatter plot and as a 2D projection. 
    (b): Projection onto the left camera.
    (c): Projection onto the right camera. Shared axes of the camera planes in (b) and (c) highlight camera translation and rotation.}
    \label{Fig:spherical_simulation}
\end{figure}

\begin{table*}[t]
\scriptsize
\setlength{\tabcolsep}{2.0pt}
\centering
\begin{minipage}{0.48\linewidth}
\centering
\begin{tabular}{lccccc}
\toprule
Method & $\sigma=0.0$ & $0.001$ & $0.005$ & $0.01$ & $0.05$ \\
\midrule
$d^{\text{epi}}$ Naive & 0.0$\pm$0 & 44.1$\pm$11 & 80.6$\pm$8 & 88.5$\pm$6 & 95.9$\pm$3 \\
$d^{\text{epi}}$ OT & 0.0$\pm$0 & 35.9$\pm$10 & 78.4$\pm$7 & 87.8$\pm$5 & 95.7$\pm$3 \\
$d^{\text{epi}}$ HOT & 0.0$\pm$0 & 29.0$\pm$9 & 71.9$\pm$8 & 83.6$\pm$6 & 91.2$\pm$5 \\
\midrule
$d^{\text{ray}}$ Naive & 0.0$\pm$0 & 44.5$\pm$11 & 80.5$\pm$8 & 88.7$\pm$5 & 95.9$\pm$3 \\
$d^{\text{ray}}$ OT & 0.0$\pm$0 & 36.1$\pm$11 & 76.9$\pm$8 & 86.9$\pm$5 & 95.5$\pm$3 \\
$d^{\text{ray}}$ HOT & 0.0$\pm$0 & 29.5$\pm$9 & 71.4$\pm$8 & 82.6$\pm$6 & 91.3$\pm$5 \\
\midrule
$d^{\text{reg}}$ Naive & 0.3$\pm$1 & 37.5$\pm$11 & 75.7$\pm$8 & 85.2$\pm$6 & 94.3$\pm$4 \\
$d^{\text{reg}}$ OT & 0.3$\pm$1 & 27.3$\pm$11 & 68.8$\pm$10 & 82.0$\pm$6 & 93.4$\pm$4 \\
$d^{\text{reg}}$ HOT & 0.2$\pm$1 & 24.9$\pm$10 & 66.4$\pm$9 & 80.2$\pm$6 & 90.8$\pm$5 \\
\bottomrule
\end{tabular}
\caption{Pointwise mismatch ratio for simulated spheres (in \%, lower is better $\downarrow$).}
\label{tab:syn_point_mismatch}
\end{minipage}
\hfill
\begin{minipage}{0.48\linewidth}
\centering
\setlength{\tabcolsep}{2.0pt}
\begin{tabular}{lccccc}
\toprule
Method & $\sigma=0.0$ & $0.001$ & $0.005$ & $0.01$ & $0.05$ \\
\midrule
$d^{\text{epi}}$ Naive & 0.0$\pm$0 & 1.2$\pm$5 & 2.0$\pm$8 & 2.3$\pm$6 & 12.3$\pm$29 \\
$d^{\text{epi}}$ OT & 0.0$\pm$0 & 1.1$\pm$5 & 1.6$\pm$4 & 2.1$\pm$5 & 8.5$\pm$15 \\
$d^{\text{epi}}$ HOT & 0.0$\pm$0 & 0.1$\pm$0 & 0.1$\pm$0 & 0.4$\pm$2 & 5.5$\pm$18 \\
\midrule
$d^{\text{ray}}$ Naive & 0.0$\pm$0 & 1.2$\pm$5 & 1.4$\pm$3 & 2.1$\pm$5 & 7.8$\pm$16 \\
$d^{\text{ray}}$ OT & 0.0$\pm$0 & 0.9$\pm$4 & 1.1$\pm$3 & 1.1$\pm$2 & 4.0$\pm$12 \\
$d^{\text{ray}}$ HOT & 0.0$\pm$0 & 0.1$\pm$0 & 0.1$\pm$0 & 0.2$\pm$0 & 0.8$\pm$0 \\
\midrule
$d^{\text{reg}}$ Naive & 0.0$\pm$0 & 0.1$\pm$0 & 0.2$\pm$0 & 0.3$\pm$0 & 5.8$\pm$15 \\
$d^{\text{reg}}$ OT & 0.0$\pm$0 & 0.1$\pm$0 & 0.1$\pm$0 & 0.1$\pm$0 & 0.6$\pm$2 \\
$d^{\text{reg}}$ HOT & 0.0$\pm$0 & 0.1$\pm$0 & 0.1$\pm$0 & 0.1$\pm$0 & 0.6$\pm$1 \\
\bottomrule
\end{tabular}
\caption{Squared Wasserstein-2 for 3D reconstruction of simulated spheres, where lower is better ($\downarrow$).}
\label{tab:syn_point_wasserstein}
\end{minipage}
\end{table*}

\begin{table}[t]
\scriptsize
\centering
\setlength{\tabcolsep}{5pt}
\begin{tabular}{lccccc}
\toprule
$c$ & $\sigma=0.0$ & $0.001$ & $0.005$ & $0.01$ & $0.05$ \\
\midrule
$d^{\text{epi}}$ & 0.0$\pm$0 & 0.0$\pm$0 & 0.8$\pm$6 & 3.2$\pm$11 & 19.2$\pm$24 \\
\midrule
$d^{\text{ray}}$ & 0.0$\pm$0 & 0.0$\pm$0 & 0.4$\pm$4 & 2.0$\pm$9 & 10.8$\pm$19 \\
\midrule
$d^{\text{reg}}$ & 0.0$\pm$0 & 0.0$\pm$0 & 0.8$\pm$6 & 1.2$\pm$7 & 15.2$\pm$22 \\
\bottomrule
\end{tabular}
\caption{Object mismatch using HOT for simulated spheres (in \%, $\downarrow$).}
\label{tab:syn_object_mismatch}
\end{table}

\paragraph*{Simulation}
We generate 100 synthetic 3D scenes composed of $N = 5$ disjoint spherical objects. 
For each object $k \in [5]$ we sample a center 
$c_k~\in~[-0.5,0.5]^2~\times~[2.5, 3.5]$ uniformly at random. The radius $r_k$ of each sphere is drawn from a uniform distribution over $[0.05,0.1]$. We reject and resample any proposed center whose sphere would overlap 
with an already placed sphere. 
On each sphere, we uniformly sample 10 points. 
This yields a point cloud of $50$ points grouped into $5$ spatially separated objects for each synthetic scene.
We project all point clouds onto two cameras located at $(0, 0, 0)$ and $(1, 0, 0)$, set $K$ as the identity matrix, and employ random camera rotations up to $\pm 15$ degrees to both. {By Remark \ref{rem:rot}, camera rotations do not impact the ray distance.} 
We repeat all experiments with varying levels of independent Gaussian noise $\mathcal{N}(0, \sigma^2)$ added to each point in both projections. An example is visualized in Figure~\ref{Fig:spherical_simulation}.

\paragraph*{Point Matching via OT and HOT}
We investigate the point-to-point matching quality for 
all combinations of distances and pointwise matching, i.e.,
for combinations of \eqref{eq:line_corrected_distance}, the depth-regularized ray distance \eqref{eq:corrected_distance_line_z} ($\gamma_1=2.5$, $\gamma_2=3.5$, $\beta=10$), and the epipolar distance~\eqref{eq:epipolar_distance}
with the naive, OT~\eqref{eq:OT_one_to_one}, and HOT matching~\eqref{eq:global_plan}. As described in Section~\ref{sec:matching_rate}, we evaluate our matching for different noise levels 
based on the mismatch ratio in Table~\ref{tab:syn_point_mismatch} and  the Wasserstein distance between the ground truth 3D point cloud and the reconstruction
in Table~\ref{tab:syn_point_wasserstein}.

Overall, we observe a quick deterioration in matching quality for increasing noise. 
The ray distance and the epipolar distance give comparable results in terms of the mismatch ratio, but we see an advantage of the ray distance in terms of the Wasserstein evaluation. The depth-regularized distance leads to a consistent improvement in the presence of noise, especially for the resulting 3D reconstruction. We see a clear advantage of the OT over the naive matching, with an additional performance boost via the HOT approach.

\paragraph*{Object Matching via HOT}
Using the HOT approach, we further investigate the resulting sphere-to-sphere matching in Table~\ref{tab:syn_object_mismatch}. Here, we obtain stable matching even in the presence of noise. Overall, we get the best results with the ray distance.

\subsection{Matching RGB and Thermal Landmarks} \label{sec:realworld}

\paragraph*{Dataset}
Our setup consists of two calibrated cameras with known intrinsic and extrinsic parameters,
capturing frontal views of a human subject, see Figure~\ref{fig:thermal}. The calibrated images were obtained during a study at Saarland University, see~\cite{flotho2021multimodal,flotho2023lagrangian}. We consider the 468 Mediapipe landmarks \cite{lugaresi2019mediapipe} based on the first RGB camera as our left point cloud.
For the right point cloud, we consider either the Mediapipe landmarks on the second RGB camera (\enquote{RGB-RGB}) or 5/70/478 landmarks from the thermal camera (\enquote{RGB-Thermal}), obtained via the landmarkers from \cite{flotho2025t} and \cite{TFW}.

\paragraph*{RGB-RGB Point Matching via OT}
In the RGB-RGB setup, we aim to match 468 Mediapipe landmarks with known ground-truth correspondence. Unlike the synthetic faces from Section~\ref{sec:Synth_Faces}, our calibration parameters and our landmark projections are subject to real-world noise. As a result, our OT matching 
based on the ray distance~\eqref{eq:line_corrected_distance} leads to points being matched at practically infinite distance from the cameras. This highlights the advantage of the regularized ray distance~\eqref{eq:corrected_distance_line_z}, where we employ the parameters $\gamma_1 = 1550$, $\gamma_2 = 1750$ penalizing the depth of the scene, and the regularization strength parameter $\beta = 100$. 
The 3D reconstructions with and without regularization are shown in Figure~\ref{fig:3d_recon_real_rgb}. While the unregularized ray distance results in a poor reconstruction, the regularized distance {reconstructs the shape of the face}. 


\begin{figure}[!ht]
\centering
\begin{subfigure}[t]{0.32\linewidth}
    \centering
    \includegraphics[width=\linewidth, trim={0 0 450 0}, clip]{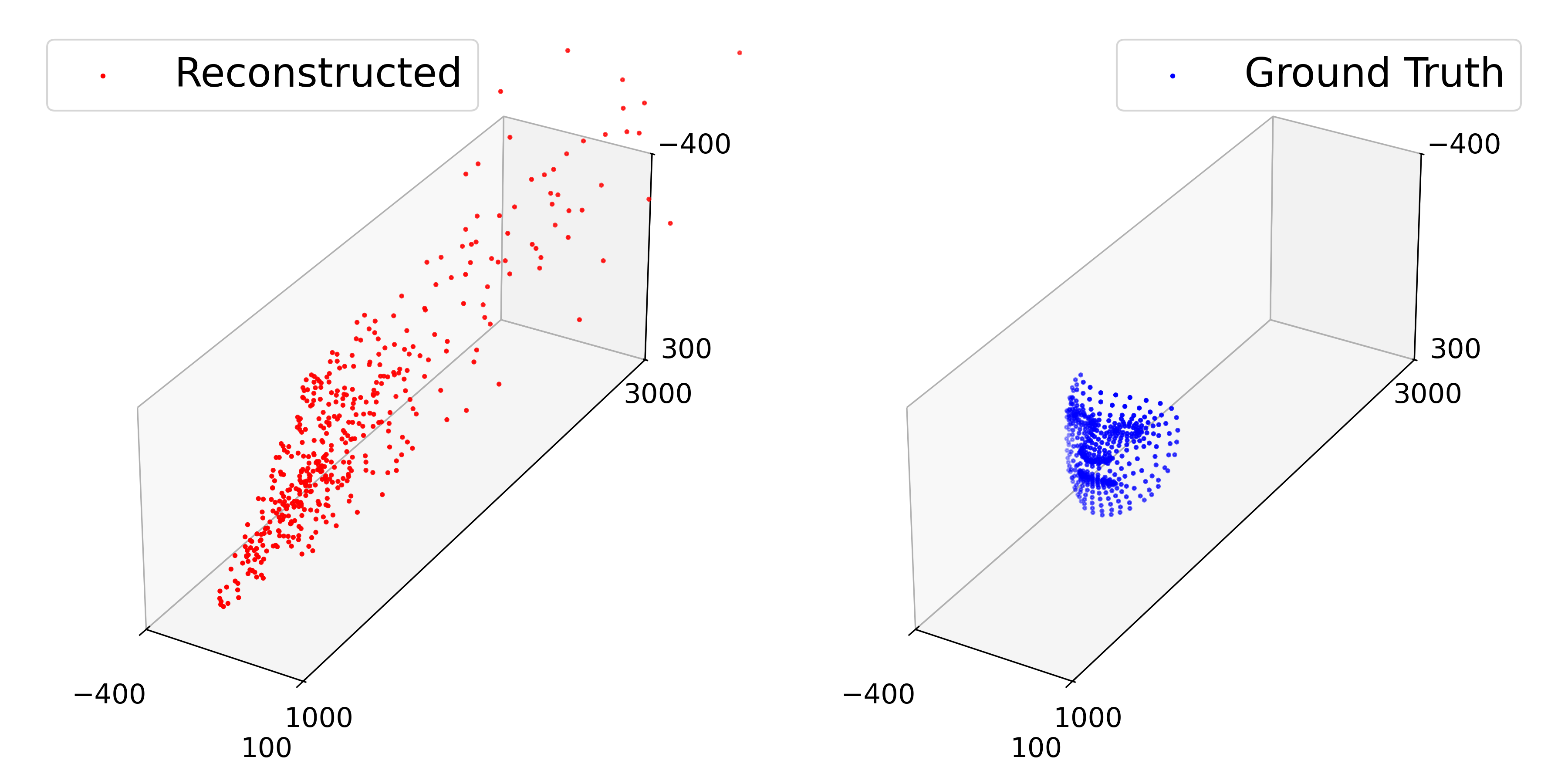}
    \caption{$d^\mathrm{ray}$}
    \label{subfig:unreg_3d}
\end{subfigure}\hfill
\begin{subfigure}[t]{0.32\linewidth}
    \centering
    \includegraphics[width=\linewidth, trim={0 0 450 0}, clip]{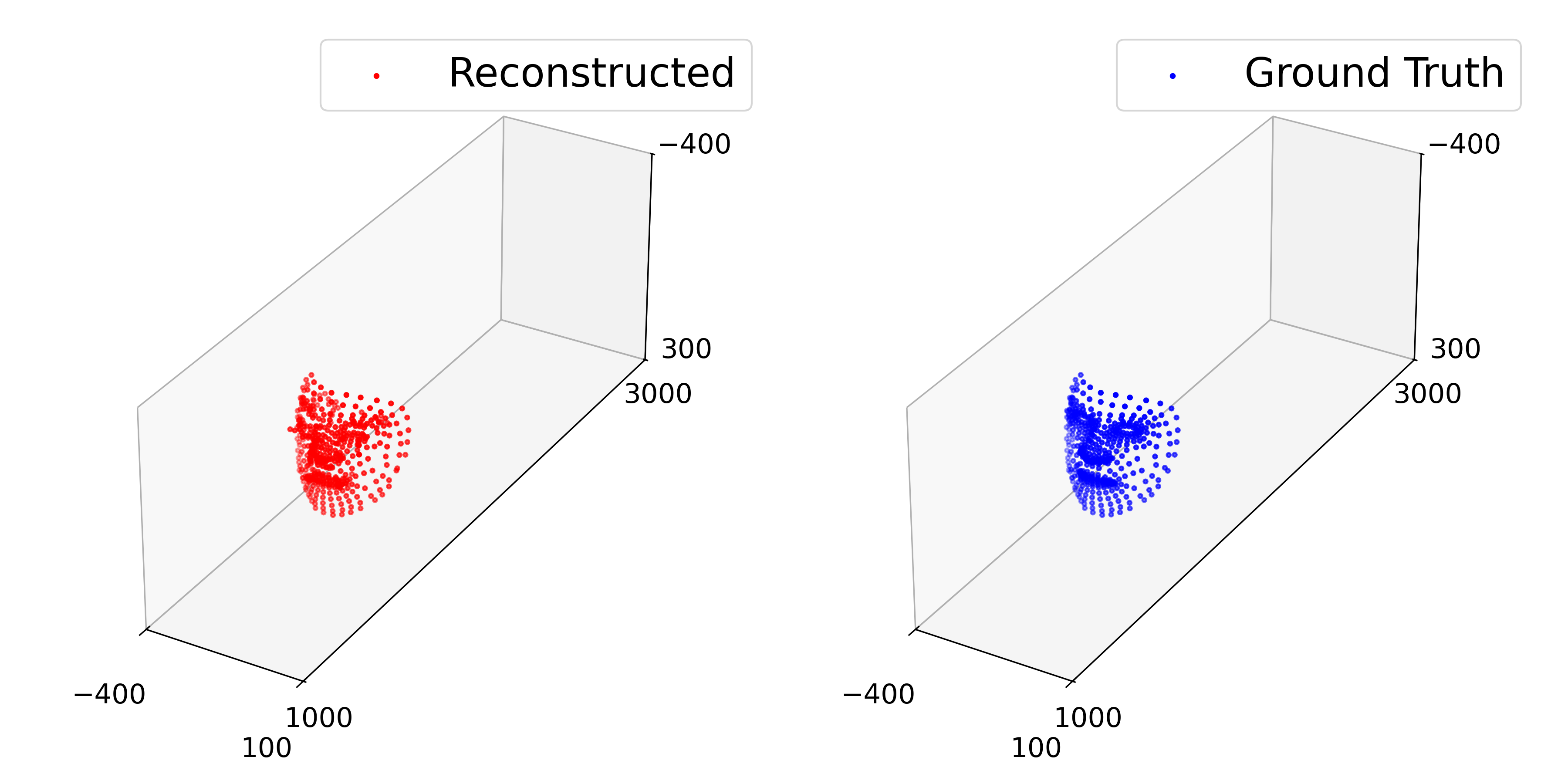}
    \caption{$d^\mathrm{reg}$}
    \label{subfig:reg_3d}
\end{subfigure}\hfill
\begin{subfigure}[t]{0.32\linewidth}
    \centering
    \includegraphics[width=\linewidth, trim={450 0 0 0}, clip]{Images_new/real_rgb_rgb_matching/reconstruction_rgb_regularized.png}
    \caption{GT}
    \label{subfig:gt_3d}
\end{subfigure}
\caption{3D reconstruction of a face using matched Mediapipe landmarks (a) with the ray distance, (b) with the depth-regularized ray distance and (c) the ground truth (GT). While the first one gives poor results, regularization prevents matching with unreasonable depth.}
\label{fig:3d_recon_real_rgb}
\end{figure}

\paragraph*{RGB-Thermal Point Matching via POT}
For the matching of RGB and thermal landmarks, there are no true correspondences, and a one-to-one matching is not possible for the 478-point (Fig.~\ref{fig:ther_full}), the 70-point (Fig.~\ref{fig:ther_sub}), or the 5-point convention (Fig.~\ref{fig:ther_5}). We perform a partial matching between the different landmark conventions using the POT \eqref{eq:POT_matching} 
with $m=\frac{\min\{N, M\}}{\max\{N, M\}}$ 
and the regularized ray distance ($\gamma_1=1550$, $\gamma_2=1750$, $\beta=100$). We visualize the results in Figure~\ref{fig:thermal_landmarks_projection} by projecting the thermal landmarks onto the RGB camera plane using the known calibration parameters and connecting matched points. Qualitatively, we see good matching with corresponding facial keypoints paired correctly across modalities.

\begin{figure}[!ht]
    \centering
    \begin{subfigure}[t]{0.32\linewidth}
        \centering
        \includegraphics[width=\linewidth]{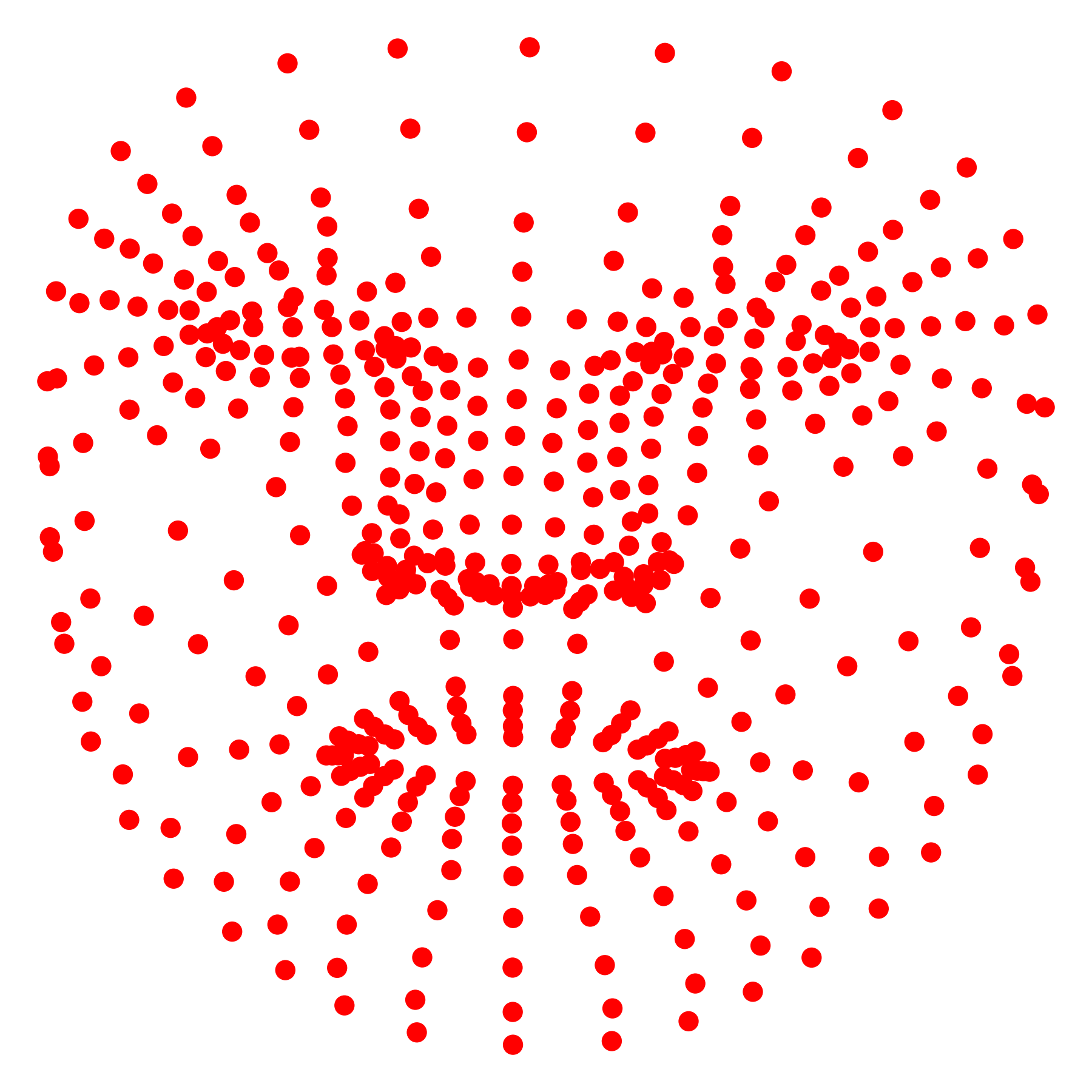}
        \caption{{478} landmarks}
        \label{fig:ther_full}
    \end{subfigure}
    \hfill
    \begin{subfigure}[t]{0.32\linewidth}
        \centering
        \includegraphics[width=\linewidth]{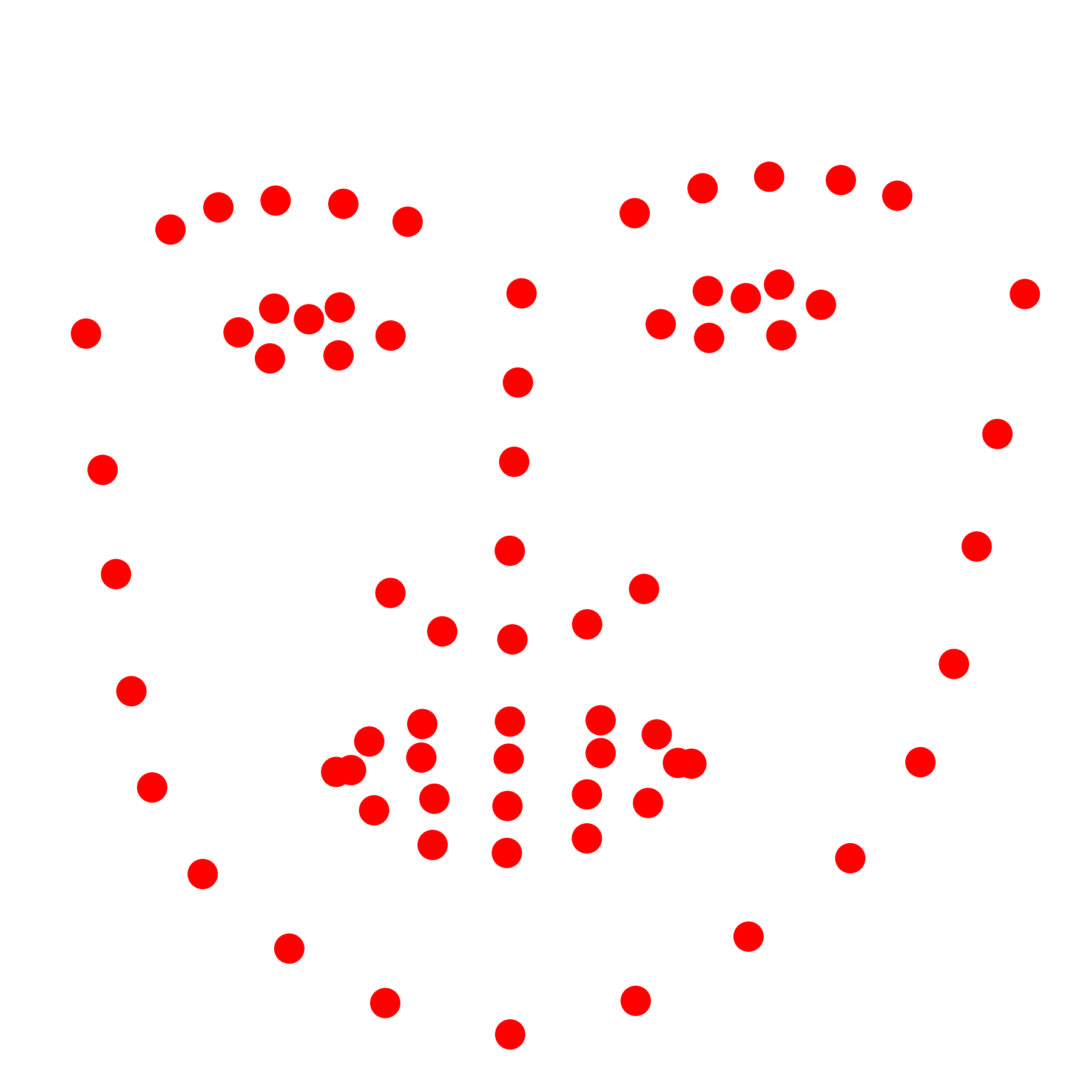}
        \caption{70 landmarks}
        \label{fig:ther_sub}
    \end{subfigure}
    \hfill
    \begin{subfigure}[t]{0.32\linewidth}
        \centering
        \includegraphics[width=\linewidth]{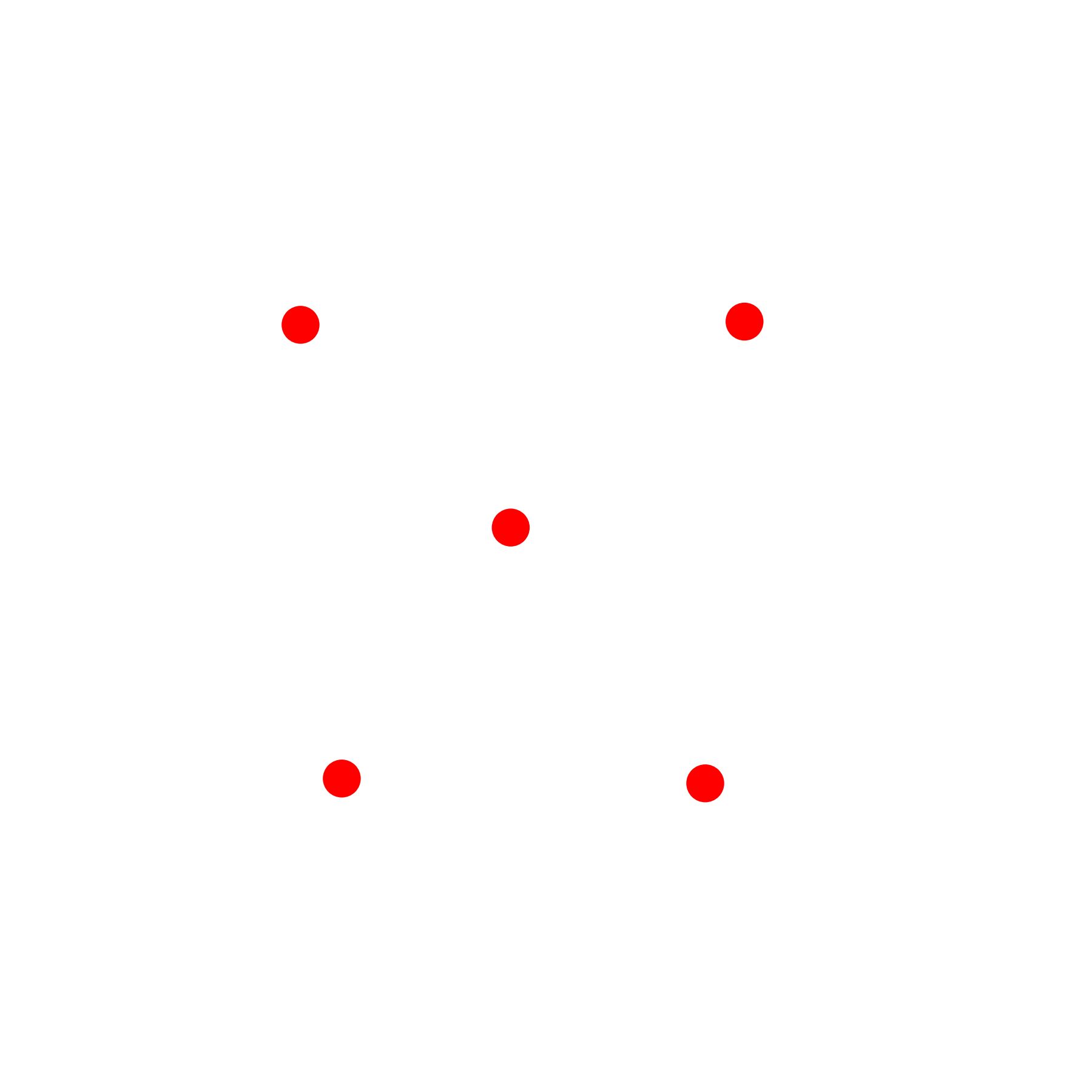}
        \caption{5 landmarks}
        \label{fig:ther_5}
    \end{subfigure}

    \vspace{0.4cm}

    \begin{subfigure}[t]{0.32\linewidth}
        \centering
        \includegraphics[width=\linewidth]{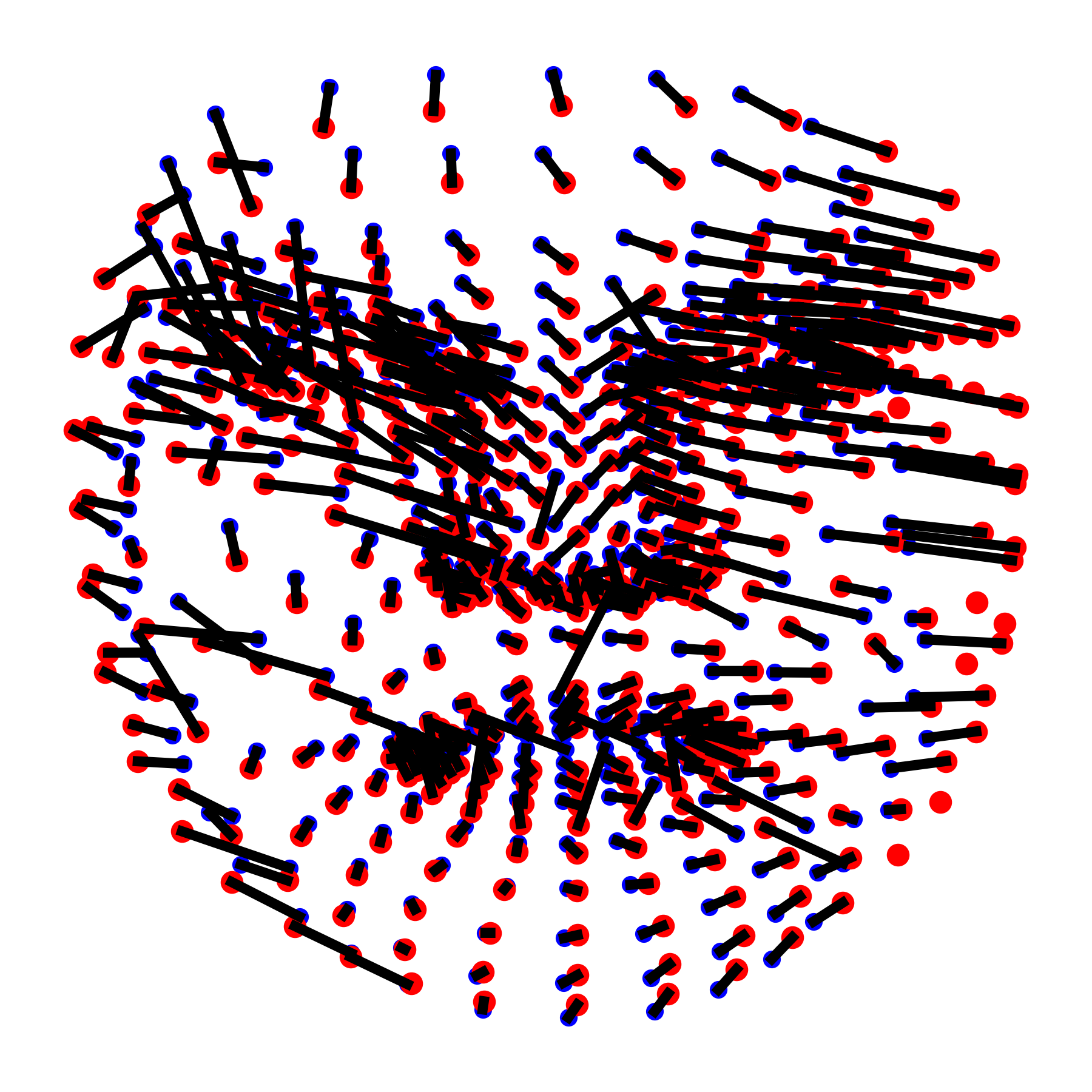}
        \caption{478 projected landmarks}
        \label{fig:matching_dense_proj}
    \end{subfigure}
    \hfill
    \begin{subfigure}[t]{0.32\linewidth}
        \centering
        \includegraphics[width=\linewidth]{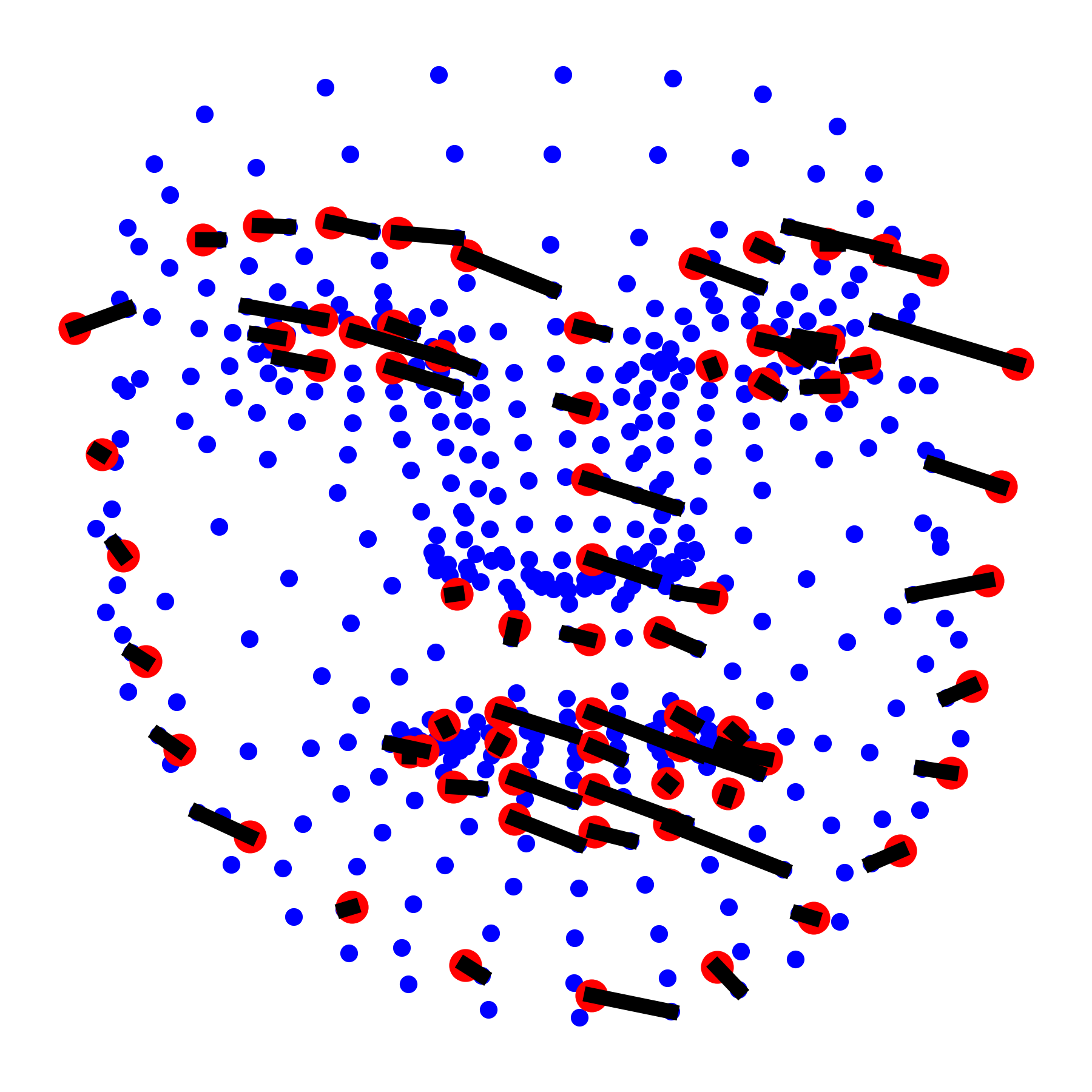}
        \caption{70 projected landmarks}
        \label{fig:matching_sparse_proj}
    \end{subfigure}
    \hfill
    \begin{subfigure}[t]{0.32\linewidth}
        \centering
        \includegraphics[width=\linewidth]{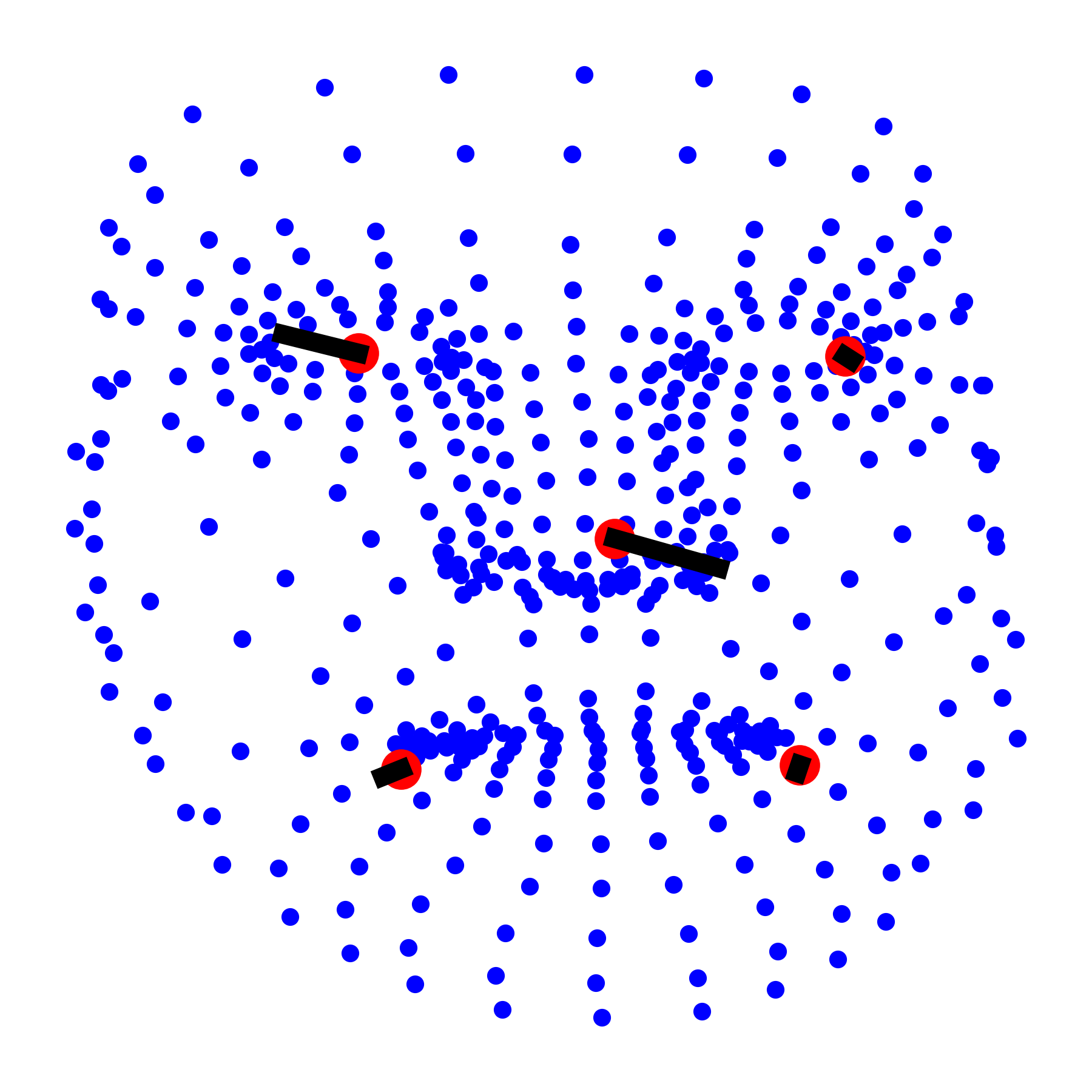}
        \caption{5 projected landmarks}
        \label{fig:matching_5pts_proj}
    \end{subfigure}

    \caption{Top row: 2D right thermal camera images for different thermal landmark conventions. Bottom row: projection of matched thermal landmarks (red) onto the left RGB camera image. Light blue lines link thermal landmarks to the matched RGB landmarks (blue).}
    \label{fig:thermal_landmarks_projection}
\end{figure}

\subsection{Matching RGB and Thermal Faces}

\paragraph*{Dataset}
Finally, we evaluate our HOT-POT approach for cross-modal face matching with various real-world measurements.
We use an RGB–thermal video recorded by the Systems Neuroscience and Neurotechnology Unit (SNNU) at Saarland University, showing three persons moving around a room. 
We extract 20 frame pairs from the videos and detect 468 RGB landmarks per face using Mediapipe \cite{lugaresi2019mediapipe} and 70 thermal landmarks using the T-FAKE landmaker \cite{flotho2025t} in combination with the TFW face tracker \cite{TFW}.
Notably, some faces are occluded in some frames, and the Mediapipe landmarker does not always detect every face, leading to a varying number of faces per frame and per camera.
Here, the frames were chosen such that the time difference is small (both cameras have a different frame rate), and to ensure that each camera detects at least one face and at least one camera detects more than one face. 

The camera calibration parameters are estimated from 59 calibration frames provided by SNNU, showing an asymmetric circle-grid target observed at different positions and orientations. Calibration is performed using standard routines from OpenCV~\cite{opencv_library}.

Examples are visualized in Figure~\ref{fig:real_examples_rgb_thermal}.
There are various sources of errors resulting from
i) the estimated camera calibration, 
ii) the inconsistent number of landmarks and faces between both cameras,
iii) temporal delays between the two camera frames,
and
iv) landmark detection errors.

\paragraph*{Face Matching via HOT-POT}
Based on the estimated camera calibration parameters, we run our HOT-POT algorithm for face matching using the ray distance~\eqref{eq:line_corrected_distance}, the depth-regularized ray distance~\eqref{eq:corrected_distance_line_z} with $\gamma_1 = 500$, $\gamma_2 = 5000$, and $\beta = 1$, as well as the epipolar distance~\eqref{eq:epipolar_distance}.

\begin{figure}[ht]
    \centering

    \begin{subfigure}{0.48\linewidth}
        \centering
        \includegraphics[width=\linewidth]{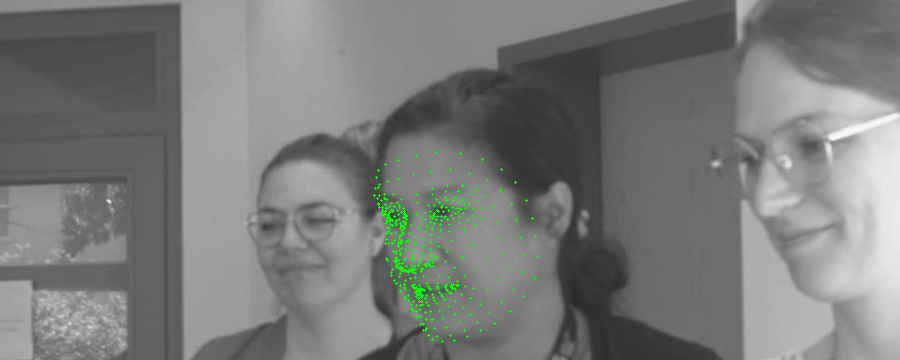}
        \caption{RGB 1}
    \end{subfigure}
    \hfill
    \begin{subfigure}{0.48\linewidth}
        \centering
        \includegraphics[width=\linewidth]{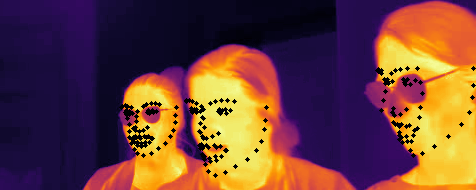}
        \caption{Thermal 1}
    \end{subfigure}
    \vspace{0.5em}
    \begin{subfigure}{0.48\linewidth}
        \centering
        \includegraphics[width=\linewidth]{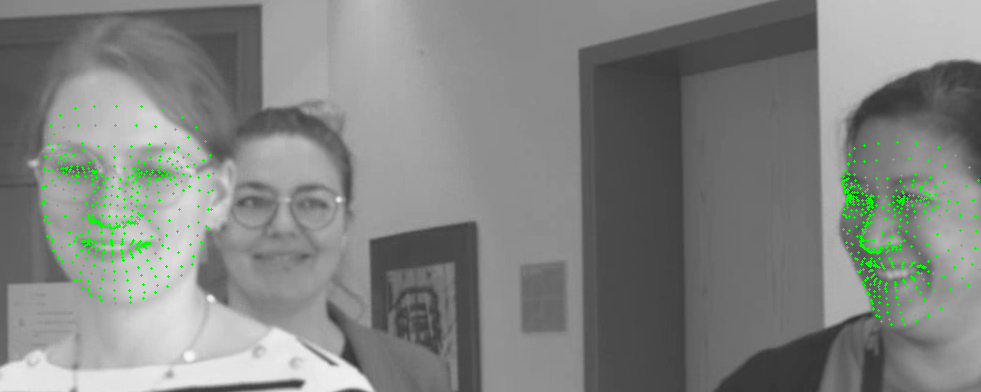}
        \caption{RGB 2}
    \end{subfigure}
    \hfill
    \begin{subfigure}{0.48\linewidth}
        \centering
        \includegraphics[width=\linewidth]{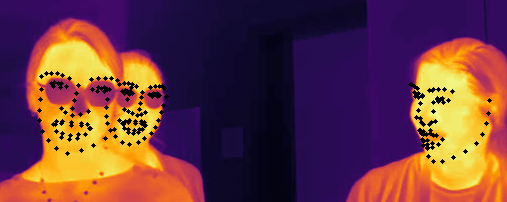}
        \caption{Thermal 2}
    \end{subfigure}
    
    \caption{Two cropped RGB and thermal landmark pairs extracted from our video provided by SNNU. For visualization, RGB images are shown in grayscale.
    Between RGB and thermal camera, there are small time lags and the number of detected faces may differ.}
    \label{fig:real_examples_rgb_thermal}
\end{figure}

{Frames are marked as correct if all landmarked faces are correctly matched and the mismatch rate is averaged over 20 frame pairs.}
As shown in Table~\ref{tab:distance_comparison}, the epipolar distance $d_{\text{epi}}$ incorrectly matches 24\% of all available face pairs, resulting in errors in 5 out of 20 frames. In contrast, the ray distance $d_{\text{ray}}$ achieves a mismatch rate of 5\%, with only a single erroneous frame. The depth-regularized ray distance $d_{\text{reg}}$ performs best, correctly matching all pairs of faces in all 20 frames.

\begin{table}[ht]
\centering
\begin{tabular}{lccc}
\hline
Method & Correct Frames & Mismatch rate (\%) \\
\hline
$d_{\text{ray}}$ & 19/20 & 5\% \\
$d_{\text{reg}}$ & 20/20 & 0\% \\
$d_{\text{epi}}$ & 15/20 & 24\% \\
\hline
\end{tabular}
\caption{Comparison of distance metrics for the real RGB-thermal data.}
\label{tab:distance_comparison}
\end{table}

\section{Conclusions} \label{sec:conclusions}

We proposed a new approach for 3D stereo matching of sparse point clouds using a partial OT framework, 
where the matching costs are derived from epipolar geometry. 
While our first cost is based on the 3D distance between the rays through the camera plane and the focal point, 
our second cost relies on enforcing the epipolar constraints.
The ray-based cost, combined with a regularization term, provides more robust performance than the commonly used epipolar constraint-based cost, especially in noisy settings.
For matching objects rather than single points, we developed a hierarchical matching framework, which first solves the POT between all possible object pairs and then calculates the matching among the objects. 
While we found a large sensitivity to measurement noise for the pointwise approach, the HOT matches the objects correctly in the case of large deviations and real-world measurements.
   
In the future, we want to extend our method to perform a three-way matching via multimarginal OT \cite{Ba2022,Lin2020,Pass2014},
integrate keypoint features such as color, and
incorporate our methods into dense stereo matching algorithms such as H-Net \cite{huang2022h}. 
Our application may become useful for public health screening, where one is interested in identifying persons with elevated temperature to prevent the spreading of infectious diseases.

\backmatter

\bmhead{Acknowledgements}
{AC gratefully acknowledges funding from the Berlin Mathematical School and support from the French Research Agency through the Holibrain project (ANR-23-CE45-0020).}
MQ gratefully acknowledges funding by the German Research Foundation (DFG): STE 571/19-1, project number 495365311, within the Austrian Science Fund (FWF) SFB 10.55776/F68 “Tomography Across the Scales”. 
MP gratefully acknowledges funding from the German Research Foundation (DFG) within the project BIOQIC
(GRK2260/289347353).
{PF acknowledges support from the Japan Society for the Promotion of Science (JSPS) under Grant No. PE25715.}
GK acknowledges funding by the BMBF VI\-ScreenPRO (ID: 100715327).
For open access purposes, the author has applied a CC BY public copyright license to any author-accepted manuscript version arising from this submission.

We are especially grateful to Mayur Bhamborae of the Systems Neuroscience and Neurotechnology Unit (SNNU) at Saarland University for providing the calibration data and frames, as well as the real RGB–thermal data used in our numerical experiments {and to Daniel J. Strauss for enabling the scientific exchange between Saarland University and TU Berlin.}

\bmhead{Data availability statement}
The code used to generate and analyse the numerical experiments supporting the results of this study is available to the reviewers during peer review and will be made publicly available upon publication.

\bmhead{Competing interests}
The authors have no competing interests to declare.

\bibliographystyle{abbrv}
\bibliography{bib}

\end{document}